\newcolumntype{C}{>{\Centering\arraybackslash}X}
\title{A Statistical Investigation of Long Memory in Language and Music}
\author{Alexander Greaves-Tunnell and Zaid Harchaoui \vspace{0.2cm}\\ University of Washington \\ \{alecgt, zaid\}@uw.edu}
\newtheorem{theorem}{Theorem}[section]
\newtheorem{proposition}[theorem]{Proposition}
\newtheorem{definition}[theorem]{Definition}
\definecolor{puorange}{rgb}{0.80,0.20,0}
\definecolor{bluegray}{rgb}{0.04,0,0.7}
\definecolor{greengray}{rgb}{0.05,0.50,0.15}
\definecolor{darkbrown}{rgb}{0.40,0.2,0.05}
\definecolor{darkcyan}{rgb}{0,0.4,1}
\definecolor{black}{rgb}{0,0,0}
\definecolor{grey}{rgb}{0.93,0.93,0.93}
\newcommand{\norm}[1] {\left \| #1 \right \|}
\newcommand{\Tr}{\operatorname{\bf Tr}}
\DeclareMathOperator{\Cov}{Cov}
\DeclareMathOperator{\Var}{Var}
\DeclareMathOperator*{\argmin}{arg\,min}
\begin{document}
\maketitle

\begin{abstract}
Representation and learning of long-range dependencies is a central challenge confronted in modern applications of machine learning to sequence data. Yet despite the prominence of this issue, the basic problem of measuring long-range dependence, either in a given data source or as represented in a trained deep model, remains largely limited to heuristic tools. We contribute a statistical framework for investigating long-range dependence in current applications of deep sequence modeling, drawing on the well-developed theory of long memory stochastic processes. This framework yields testable implications concerning the relationship between long memory in real-world data and its learned representation in a deep learning architecture, which are explored through a semiparametric framework adapted to the high-dimensional setting.

\end{abstract}

\section{Introduction}
	Advances in the design and optimization of deep recurrent neural networks (RNNs) have lead to significant breakthroughs in the modeling of complex sequence data, including natural language and music. An omnipresent challenge in these sequence modeling tasks is to capture long-range dependencies between observations, and a great variety of model architectures have been developed with this objective explicitly in mind. However, it can be difficult to assess whether and to what extent a given RNN has learned to represent such dependencies, that is, whether it has \emph{long memory}. 

Currently, if a model's capacity to represent long-range dependence is measured at all, it is typically evaluated heuristically against some task or tasks in which success is taken an indicator of ``memory" in a colloquial sense. Though undoubtedly helpful, such heuristics are rarely defined with respect to an underlying mathematical or statistical property of interest, nor do they necessarily have any correspondence to the data on which the models are subsequently trained. In this paper, we pursue a complementary approach in which long-range dependence is assessed as a quantitative and statistically accessible feature of a given data source. Consequently, the problem of evaluating long memory in RNNs can be re-framed as a comparison between a learned representation and an estimated property of the data.

The main contribution is the development and illustration of a methodology for the estimation, visualization, and hypothesis testing of long memory in RNNs, based on an approach that mathematically defines and directly estimates long-range dependence as a property of a multivariate time series. We thus contextualize a core objective of sequence modeling with deep networks against a well-developed but as-yet-unexploited literature on long memory processes. 

We offer extensive validation of the proposed approach and explore strategies to overcome problems with hypothesis testing for long memory in the high-dimensional regime. We report experimental results obtained on a wide-ranging collection of real-world music and language data, confirming the (often strong) long-range dependencies that are observed by practitioners. However, we find that this property is not adequately captured by a variety of RNNs trained to benchmark performance on a language dataset..\footnote{Code corresponding to these experiments, including an illustrative Jupyter notebook, is available for download at \url{https://github.com/alecgt/RNN_long_memory}.} 


\paragraph{Related work.}

Though a formal connection to long memory processes has been lacking thus far, machine learning applications to sequence modeling have long been concerned with the capture of long-range dependencies. The development of RNN models has been strongly influenced by the identification of the ``vanishing gradient problem" in \cite{bengio}. More complex recurrent architectures, such as long short-term memory \citep{hochreiter}, gated recurrent units \citep{cho}, and structurally constrained recurrent networks \citep{mikolov} were designed specifically to alleviate this problem. Alternative approaches have pursued a more formal understanding of RNN computation, for example through kernel methods \citep{lei}, by means of ablative strategies clarifying the computation of the RNN hidden state \citep{levy}, or through a dynamical systems approach \citep{miller}. A modern statistical perspective on nonlinear time series analysis is provided in \citep{douc}.

Long-range dependence is most commonly evaluated in RNN models by test performance on a synthetic classification task. For example, the target may be the parity of a binary sequence (so-called ``parity" problems), or it may be the class of a sequence whose most recent terms are replaced with white noise (``2-sequence" or ``latch" problems) \citep{bengio,bengio2,lin}. A simple demonstration relatively early in RNN history by \citet{hochreiter2} showed that such tasks can often be solved quickly by random parameter search, casting doubt on their informativeness. Whereas the authors proposed a different heuristic, we seek to re-frame the problem of long memory evaluation so that it is amenable to statistical analysis.

Classical constructions of long memory processes \citep{mandelbrot2,granger,hosking} laid the foundation for statistical methods to estimate long memory from time series data. See also \citep{moulines, reisen} for recent works in this area. The multivariate estimator of \citet{shimotsu} is the foundation of the methodology we develop here. It is by now well understood that failure to properly account for long memory can severely diminish performance in even basic estimation or prediction tasks. For example, the sample variance is both biased and inefficient as an estimator of the variance of a stationary long memory process \citep{percivalgut}. Similarly, failure to model long memory has been shown to significantly harm the predictive performance of time series models, particularly in the case of multi-step forecasting \citep{brodsky}.

	\section{Background}
\paragraph{Long memory in stochastic processes.}
Long memory has a simple and intuitive definition in terms of the autocovariance sequence of a real, stationary stochastic process $X_t \in \mathbb{R}, t\in \mathbb{Z}$. The process $X_t$ is said to have long memory if the autocovariance
$$ \gamma(k) = \Cov(X_t, X_{t+k}), \ \ k \in \mathbb{Z}$$
satisfies
\begin{align} \gamma_X(k) \sim L_\gamma(k) \lvert k \rvert^{-(1-2d)} \ \ \text{as} \ \ k\to\infty, \label{eq:lm_acv} \end{align}
for some $d \in (0,1/2)$, where $a(k) \sim b(k)$ indicates that $a(k)/b(k) \to 1 \ \ \text{as} \ \ k\to\infty$ and $L_\gamma(k)$ is a slowly varying function at infinity (refer to Appendix A for an introduction to slowly varying functions). The term ``long memory" is justified by the slow (hyperbolic) decay of the autocovariance sequence, which enables meaningful information to be preserved between distant observations in the series. As a consequence of this slow decay, the partial sums of the absolute autocovariance sequence diverge. This can be directly contrasted with the ``short memory" case, in which the autocovariance sequence is absolutely summable. Moreover, we note that the parameter $d$ allows one to quantify the memory by controlling the strength of long-range dependencies.

In the time series literature, a spectral definition of ``memory" is preferred, as it unifies the long and short memory cases. A second-order stationary time series can be represented in the frequency domain by its spectral density function
$$ f_X(\lambda) = \sum_{k=-\infty}^\infty \gamma(k) e^{-ik\lambda}.$$
If $X_t$ has a spectral density function that satisfies
\begin{align} f_X(\lambda) &= L_f(\lambda) \lvert \lambda\rvert^{-2d} \label{eq:lm_sdf} \end{align}
where $L_f(\lambda)$ is slowly varying at zero, then $X_t$ has long memory if $d \in (0,1/2)$, short memory for $d = 0$, and ``intermediate memory" or ``antipersistence" if $d \in (-1/2,0)$. The two definitions of long memory are equivalent when $L_f(\lambda)$ is quasimonotone \citep{beran}. 

We summarize the complementary time and frequency domain views of long memory with a simple illustration in Figure \ref{fig:longmem_ex}, which contrasts a short memory autoregressive (AR) process of order 1 with its long memory counterpart, the fractionally integrated AR process. The autocovariance series is seen to converge rapidly for the AR process, whereas it diverges for the fractionally integrated AR process. Meanwhile, Eq. \eqref{eq:lm_sdf} implies that the long memory parameter $d$ has a geometric interpretation in the frequency domain as the slope of $\log f_X(\lambda)$ versus $-2\log(\lambda)$ as $\lambda \to 0$ (i.e. $-2\log(\lambda) \to \infty$).

\begin{figure*}
  \includegraphics[width=\textwidth,height=5cm]{./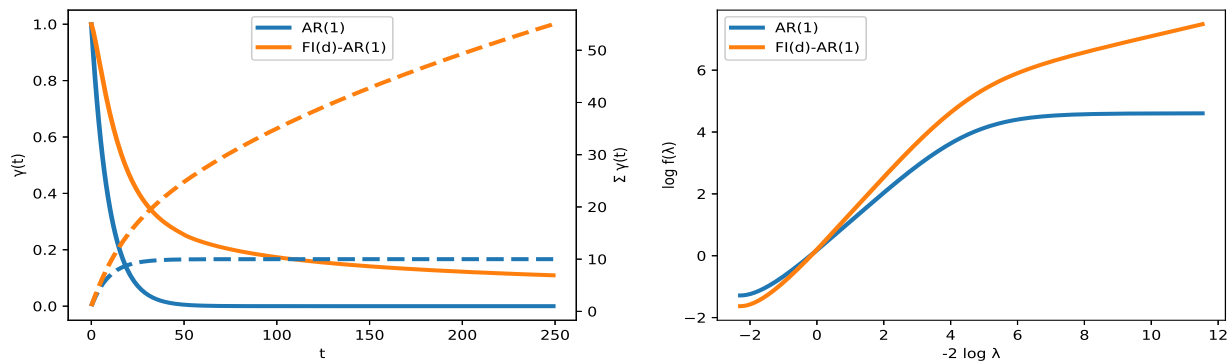}
  \caption{Time and frequency domain views of an AR(1) process $(\text{blue}, d = 0)$ and its long memory counterpart obtained by fractional differencing $(\text{orange}, d = 0.25)$. \emph{Left:} Autocorrelation sequences (solid lines) of the two processes, along with their partial sums (dotted lines). \emph{Right:} Plot of log spectral density versus $-2$ times log frequency.}
  \label{fig:longmem_ex}
\end{figure*}

Finally, the relevance of this topic in machine learning can be motivated through its consequences for prediction. Again, the contrast between the AR and fractionally integrated AR processes provides a simple and concrete illustration. Writing each process in terms of its Wold decomposition \citep{davis}
$$X_t = \sum_{j=0}^\infty a_j \varepsilon_{t-j},$$
the optimal predictor $\hat{X}_{t+h}$ of $X_{t+h}$ for the prediction horizon $h \geq 1$ is defined as the minimizer of the mean squared error $\text{MSE}(h) = \mathbb{E}[(\hat{X} - X_{t+h})^2]$
$$\hat{X}_{t+h} = \argmin_{\hat{X}} \mathbb{E}[(\hat{X} - X_{t+h})^2] = \sum_{j=0}^\infty a_{j+h} \varepsilon_{t-j}$$
and analyzed in terms the proportion of variance explained $R^2(h) = 1 - \Var(X_{t+h})^{-1} \text{MSE}(h)$. In the short memory AR case, the terms $\vert a_j\vert$, and therefore $R^2(h)$, decay exponentially (see Appendix B); by contrast, when $X_t$ is fractionally integrated $R^2(h)$ decays only hyperbolically \citep{beran}. Therefore, in the long memory regime, past observations can retain significant explanatory power with respect to future prediction targets, and informative forecasts are available over horizons extending well beyond that of an analogous short memory process.

\paragraph{Many common models do not have long memory.}

Despite the appeal and practicality of long memory for modeling complex time series, we emphasize that it is absent from nearly all common statistical models for sequence data. We offer a short list of examples, with proofs deferred to Appendix B of the Supplement.
\begin{itemize}

\item \emph{Markov models}. If $X_t$ is a Markov process on a finite state space $\mathcal{X}$, and $Y_t = g(X_t)$ for any function $g: \mathcal{X} \to \mathbb{R}$, then $Y_t$ has short memory. We show that this property holds even in a complex model with Markov structure, the Markov transition distribution model for high-order Markov chains \citep{raftery}. In light of this, long memory processes are sometimes called ``non-Markovian".

\item \emph{Autoregressive moving average (ARMA) models}. ARMA models, a ubiquitous tool in time series modeling, likewise have exponentially decaying autocovariances and thus short memory \citep{davis}. This may be somewhat surprising, as causal ARMA models with nontrivial moving average components are equivalent to linear autoregressive models of infinite order.

\item \emph{Nonlinear autoregressions}. Finally, and most importantly for our present focus, nonlinearity of the state transition function is no guarantee of long memory. We show that a class of autoregressive processes in which the state is subject to iterated nonlinear transformations still fails to achieve a slowly decaying autocovariance sequence \citep{lin, gourieroux}.

\end{itemize}

\paragraph{Semiparametric estimation of long memory.}

Methods for the estimation of the long memory parameter $d$ have been developed and analyzed under increasingly broad conditions. Here, we focus on semiparametric methods, which offer consistent estimation of the long memory without the need to estimate or even specify a full parametric model. The term ``semiparametric" refers to the fact that the estimation problem involves a finite-dimensional parameter of interest (the long memory vector) and an infinite-dimensional nuisance parameter (the spectral density).

Semiparametric estimation in the Fourier domain leverages the implication of Eq. \eqref{eq:lm_sdf} that 
\begin{align} f_X(\lambda) &\sim c_f \lvert \lambda \rvert^{-2d} \end{align}
as $\lambda \to 0$, with $c_f$ a nonzero constant. Estimators are constructed directly from the periodogram using only terms corresponding to frequencies near the origin. The long memory parameter $d$ is estimated either by log-periodogram regression, which yields the Geweke-Porter-Hudak (GPH) estimator \citep{geweke}, or through a local Gaussian approximation, which gives the Gaussian semiparametric estimator (GSE) \citep{robinson}. The GSE offers greater efficiency, requires weaker distributional assumptions, and can be defined for both univariate and multivariate time series; therefore it will be our main focus.

\paragraph{Multivariate long memory processes.}  

Analysis of long memory in multivariate stochastic processes is a topic of more recent investigation in the time series literature. The common underlying assumption in multivariate semiparametric estimation of long memory is that the real, vector-valued process $X_t \in \mathbb{R}^p$, can be written as
\begin{align} \begin{bmatrix} (1-B)^{d_1} & \ \ & 0 \\ \ \ & \ddots &  \ \ \\ 0 & \ \ & (1-B)^{d_p}  \end{bmatrix} \begin{bmatrix} X_{t1} \\ \vdots \\ X_{tp}  \end{bmatrix} &= \begin{bmatrix} U_{t1}  \\ \vdots \\ U_{tp}  \end{bmatrix}, \label{eq:multivar} \end{align}
where $X_{ti}$ is the $i^{th}$ component of $X_t$, $U_t \in \mathbb{R}^p$ is a second-order stationary process with spectral density function bounded and bounded away from zero at zero frequency, $B$ is the backshift in time operator, and $|d_i| < 1/2$ for every $i = 1,...,p$ \citep{shimotsu}. The backshift operation $B^j X_{t} = X_{t-j}$, $j \in \mathbb{Z}$ is extended to non-integer orders via
$$ (1-B)^{-d} = \sum_{k=0}^\infty \frac{\Gamma(d+k)}{k!\Gamma(d)} B^k,$$
and thus $X_t$ is referred to as a \emph{fractionally integrated} process when $d \neq 0$. Fractionally integrated processes are the most commonly used models for data with long-range dependencies, encompassing parametric classes such as the vector autoregressive fractionally integrated moving average (VARFIMA), a multivariate and long memory extension of the popular ARMA family of time series models.

If $X_t$ is defined as in Eq. \eqref{eq:multivar}, then its spectral density function $f_X(\lambda)$ satisfies \citep{hannan}
$$ f_X(\lambda) = \Phi(\lambda,d) f_U(\lambda) \Phi^*(\lambda,d),$$
where $x^*$ denotes the complex conjugate of $x$, $f_U(\lambda)$ is the spectral density function of $U_t$ at frequency $\lambda$, and
 \begin{align*} 
 \Phi(\lambda,d) &= \text{diag}\left( (1-e^{i\lambda})^{-d_i}\right)_{i = 1,...,p}. \end{align*}
 
Given an observed sequence $(x_1,...,x_T) = x_{1:T}$ with discrete Fourier transform
$$ y_j = \frac{1}{\sqrt{2\pi T}} \sum_{t = 1}^T x_t  e^{- i \lambda_j t}, \ \ \lambda_j = 2\pi j / T,$$
the spectral density matrix is estimated at Fourier frequency $\lambda_j$ by the periodogram
$$ I(\lambda_j) = y_j y_j^*.$$

Under the assumption that $f_U(\lambda) \sim G$ as $\lambda \to 0$ for some real, symmetric, positive definite $G \in \mathbb{R}^{p\times p}$, the local behavior of $f_X(\lambda)$ around the origin is governed only by $d$ and $G$:
\begin{align}
f(\lambda_j) &\sim  \Phi(\lambda,d) G  \Phi^*(\lambda,d) \label{eq:multi_sdf}. \end{align}
 
\paragraph{The Gaussian semiparametric estimator}

The Gaussian semiparametric estimator of $d$ \citep{shimotsu} is computed from a local, frequency-domain approximation to the Gaussian likelihood based on Eq. \eqref{eq:multi_sdf}. The approximation is valid under restriction of the likelihood to a range of frequencies close to the origin. Using the identity $1-e^{-i\lambda} = 2\sin(\lambda / 2)e^{i(\pi-\lambda)/2}$, we have the approximation
\begin{align*} \Phi(\lambda,d) &\approx   \text{diag}(\lambda^{-d}e^{i(\pi-\lambda)/2}) \triangleq \Lambda(d), \end{align*}
which is valid up to an error term of order $O(\lambda^2)$.

The Gaussian log-likelihood is written in the frequency domain as \citep{whittle}
\begin{align*}
\mathcal{L}_m &= \frac{1}{m} \sum_{j=1}^m \log \det f_X(\lambda_j) + \Tr \left[  f_X(\lambda_j)^{-1} y_jy_j^*\right] \\
&\approx \frac{1}{m} \sum_{j=1}^m \Big[ \log \det \Lambda_j(d) G \Lambda^*_j(d) + \Tr  \left[ \left(\Lambda_j(d) G \Lambda^*_j(d) \right)^{-1} I(\lambda_j) \right]  \Big]. \end{align*}
Validity of the approximation is ensured by restriction of the sum to the first $m$ Fourier frequencies, with $m = o(T)$.

Solving the first-order optimality condition
\begin{align*} 
\frac{\partial \mathcal{L}_m}{\partial G} = \frac{1}{m} \sum_{j=1}^m &\Big[ (G^T)^{-1} - \left( G^{-1}\Lambda_j(d)^{-1} I(\lambda_j)\Lambda^*_j(d)^{-1}G^{-1}\right)^T  \Big] = 0 \end{align*}
for $G$ yields 
\[ \widehat{G}(d) = \frac{1}{m} \sum_{j=1}^m \text{Re}\left[\Lambda_j(d)^{-1} I(\lambda_j) \Lambda_j^*(d)^{-1} \right] .\]

Substitution back into the objective results in the expression
\begin{align} 
\mathcal{L}_m(d) = \log \det \widehat{G}(d) - 2 \sum_{i = 1}^p d_i \sum_{j = 1}^m \log \lambda_j,  \label{eq:loglik} \end{align}
and the Gaussian semiparametric estimator is obtained as the minimizer
\begin{align} 
\hat{d}_{\text{GSE}} &= \argmin_{d \in \Theta} \mathcal{L}_m(d), \label{eq:dGSE} \end{align}
over the feasible set $\Theta = (-1/2,1/2)^p$.

A key result due to \citet{shimotsu} establishes that the estimator $\hat{d}_{\text{GSE}}$ is consistent and asymptotically normal under mild conditions, with
\begin{align} 
\sqrt{m} (\hat{d}_{\text{GSE}} - d_0) \to_d \mathcal{N}(0, \Omega^{-1}), \label{eq:asy} \end{align}
where
\[\Omega = 2\left[ I_p + G \odot G^{-1} + \frac{\pi^2}{4} (G \odot G^{-1} - I_p) \right], \]
 $d_0$ is the true long memory, and $\odot$ denotes the Hadamard product.

\paragraph{Optimization.}

Relatively little discussion of optimization procedures for problem in Eq. \eqref{eq:dGSE} is available in the time series literature. We are not aware of any proof that the objective is convex in the multivariate setting for instance.

To compute the estimator $\hat{d}_{\text{GSE}}$, we apply L-BFGS-B, a quasi-Newton algorithm that handles box constraints \citep{byrd}. L-BFGS-B is an iterative algorithm requiring the gradient of the objective; this is derived in Appendix C of the Supplement. 

\paragraph{Bandwidth selection} The choice of the bandwidth parameter $m$ determines the tradeoff between bias and variance in the estimator: at small $m$ the variance may be high due to few data points, while setting $m$ too large can introduce bias by accounting for the behavior of the spectral density function away from the origin. 

When it is possible to simulate from the target process, as will be the case when we evaluate criteria for long memory in recurrent neural networks, we can naturally control the variance simply by simulating long sequences and computing a dense estimate of the periodogram. Without knowledge of the shape of the spectral density function, however, it is difficult to know how to set the bandwidth to avoid bias, and thus we prefer the relatively conservative setting of $m = \sqrt{T}$. This choice is justified by a bias study for the bandwidth parameter, which is given in Appendix D of the Supplement.

	\section{Methods}
	\paragraph{RNN hidden state as a nonlinear model for a long memory process.}

The standard tool for statistical modeling of multivariate long memory processes is the vector autoregressive fractionally integrated moving average (VARFIMA) model, which represents the process $X_t \in \mathbb{R}^p$ with long memory parameter $d$ as 
\[ \Phi(B) (1-B)^d X_t = \theta(B) Z_t,\]
where $Z_t$ is a white noise process and $(1-B)^d = \text{diag}((1-B)^{d_i})$, $i = 1,..., p$ \citep{lobato, sowell}. Under the standard stationarity and invertibility conditions on the matrix polynomials $\Phi(B)$ and $\Theta(B)$, respectively, the process can be represented as
\[
X_t = (1-B)^{-d} \Phi^{-1}(B) \Theta(B) Z_t,
\]
which shows that the $X_t$ has a composite representation in terms of linear ``features" of the input sequence and an explicit fractional integration step ensuring that it satisfies the definition of multivariate long memory in Eq. \eqref{eq:multivar}.

We extend this view to deep network models for sequences with long range dependencies. The key difference is that RNN models are not constrained to work with a linear representation of the data, nor do they explicitly contain a step that guarantees the long memory of $X_t$. To evaluate long memory in an RNN model, we study the stochastic process
\begin{align}
X_t &= \Psi(Z_t), \label{eq:rnn}
\end{align}
where $Z_t$ is again a white noise, and the nonlinear transformation $\Psi$ describes the RNN transformation of inputs to the hidden state. In a typical RNN model, a decision rule is learned by linear modeling of the hidden state; this framework thus aligns with a broader theoretical characterization of deep learning as approximate linearization of complex decision boundaries in input space by means of a learned nonlinear feature representation \citep{bruna, ckn1, ckn2, bietti}.

\paragraph{Testable criteria for RNN capture of long-range dependence.}

The complexity of $\Psi(\cdot)$ corresponding to even the most basic RNN sequence models precludes a fully theoretical treatment of long memory in processes described by Eq. \eqref{eq:rnn}. Nonetheless, this characterization suggests an approach for the statistical evaluation of long memory in RNNs, as it establishes testable criteria under which a model of the form Eq. \eqref{eq:rnn} describes a process $X_t$ with long memory. In particular, to satisfy the definition in Eq. \eqref{eq:multivar} we must have
\[
X_t = \Psi(Z_t) =  (1-B)^{-d} \tilde{\Psi}(Z_t)
\]
for some $d \neq 0$ and process $\tilde{\Psi}(Z_t)$ with bounded and nonzero spectral density at zero frequency. Semiparametric estimation of $d$ in the frequency domain provides a means to evaluate this condition such that the results are agnostic to the behavior of $\tilde{\Psi}(Z_t)$ at higher frequencies. If $\Psi(Z_t)$ admits a representation in terms of an explicit fractional integration step, then this can be investigated in two complementary experiments: 

\begin{enumerate}

\item {\bf Integration of fractionally differenced input.} Define 
$$\tilde{X}_t = (1-B)^{d} Z_t, $$
where $Z_t$ is a standard Gaussian white noise and $d$ is the long memory parameter corresponding to the source $X_t$ on which the model was trained. If the sequence $\tilde{x}_{1:T}$ is drawn from $\tilde{X}_t$, then we expect to find that
$$ \hat{d}_{\text{GSE}}(\tilde{h}_{1:T}) \approx 0,$$
where $\tilde{h}_{1:T} = \Psi(\tilde{x}_{1:T})$ is the RNN hidden representation of the simulated input. On the other hand, nonzero long memory in the hidden state indicates a mismatch between fractional integration learned by the RNN and long memory of the data $X_t$.

\item {\bf Long memory transformation of white noise.} Conversely, we expect to find that the RNN hidden representation of a white noise sequence has a nonzero long memory parameter. White noise has a constant spectrum and thus a long memory parameter equal to zero. If $\Psi(\cdot)$ performs both the feature representation and fractional integration functions that are handled separately and explicitly in the VARFIMA model, then a zero-memory input will be transformed to a nonzero-memory sequence of hidden states.

\end{enumerate}

\paragraph{Total memory.}

It is common for sequence embeddings and RNN hidden layers to have hundreds of dimensions, and thus long memory estimation for these sequences naturally occurs in a high-dimensional setting. This topic is virtually unexplored in the time series literature, where multivariate studies tend to have modest dimension. Practically, this raises two main issues. First, if $p \approx m$ for dimension $p$ and bandwidth $m$, then the approximation of the test statistic distribution by its asymptotic limit will be of poor quality, and the resulting test is likely to be miscalibrated. Second, it becomes difficult to interpret the long memory vector $d$, particularly when the coordinates of the corresponding time series are not meaningful themselves.

We resolve both issues by considering the \emph{total memory} statistic $\bar{d}$, defined as
\begin{align} \bar{d} &= \mathbbm{1}^T\hat{d}_{\text{GSE}}. \label{eq:tm} \end{align}

Computation of the total memory is no more complex than that of the GSE, and it has an intuitive interpretation as the coordinate-wise aggregate strength of long memory in a multivariate time series. 

\paragraph{Asymptotic normality of the total memory estimator.}

The total memory is a simple linear functional of the GSE, and thus its consistency and asymptotic normality can be established by a simple argument. In particular, defining 
$$ \bar{d} = g(d) \triangleq \mathbbm{1}^T\hat{d}_{\text{GSE}},$$
we see that $\nabla g(d) = \mathbbm{1}$, which is clearly nonzero at zero, so that by Eq. \eqref{eq:asy} and the delta method we have
\begin{align}
\sqrt{m} (\bar{d} - \bar{d}_0) &\to_d \mathcal{N}(0, \mathbbm{1}^T\Omega^{-1} \mathbbm{1}), \label{eq:tm_asy} \end{align}
where $\bar{d}_0$ is the true total memory of the observed process.

To validate this proposed estimator, we provide a ``sanity check" on simulated high-dimensional data with known long memory in Appendix E of the Supplement. 

\paragraph{Visualizing and testing for long memory in high dimensions.}

The visual time-domain summary of long memory in Figure \ref{fig:longmem_ex} can be extended to the multivariate setting. In this case, the autocovariance $\gamma(k) = \Cov(X_t,X_{t+k})$ is matrix-valued, which for the purpose of evaluating long memory can be summarized by the scalar $\Tr( \vert \gamma(k) \vert)$, where the absolute value is taken element-wise. Recall that a sufficient condition for short memory is the absolute convergence of the autocovariance series, whereas this series diverges for long memory processes.

From a testing perspective, a statistical decision rule for the presence of long memory can be derived from the asymptotic distribution of the corresponding estimator. However, when the dimension $p$ is large and we conservatively set the bandwidth $m = \sqrt{T}$, we may have $m\approx p$ even when the observed sequence is relatively long.

The classical approach to testing for the multivariate Gaussian mean is based on the Wald statistic
$$ m(\hat{d}-d_0)^T\Omega(\hat{d}-d_0),$$
which has a $\chi^2(p)$ distribution under the null hypothesis $\mathcal{H}_0: d = d_0$. 

In Appendix F of the Supplement, we give a demonstration that the standard Wald test can be seriously miscalibrated when $m\approx p$, whereas testing for long memory with the total memory statistic remains well-calibrated in this setting. These results are consistent with previous observations that the Wald test for long memory can have poor finite-sample performance even in low dimensions \citep{shimotsu,hurvich}, though these studies suggest no alternative.

	\section{Experiments}
	\subsection{Long memory in language and music}
Much of the development of deep recurrent neural networks has been motivated by the goal of finding good representations and models for text and audio data. Our results in this section confirm that such data can be considered as realizations of long memory processes.\footnote{Code for all results in this section is available at \url{https://github.com/alecgt/RNN_long_memory}} A full summary of results is given in Table \ref{tab:data_res}, and autocovariance partial sums are plotted in Figure \ref{fig:data_longmem}. To facilitate comparison of the estimated long memory across time series of different dimension, we report the normalized total memory $\bar{d}/p = (\mathbbm{1}^T\hat{d}_{\text{GSE}})/p$ in all tables. 

For all experiments, we test the null hypothesis
$$\mathcal{H}_0: \bar{d_0} = 0$$
against the one-sided alternative of long memory,
$$\mathcal{H}_1: \bar{d_0} > 0.$$
We set the level of the test to be $\alpha = 0.05$ and compute the corresponding critical value $c_\alpha$ from the asymptotic distribution of the total memory estimator. Given an estimate of the total memory $\bar{d}(x_{1:T})$, a p-value is computed as $P(\bar{d} > \bar{d}(x_{1:T}) | \bar{d_0} = 0)$; note that a p-value less than $\alpha = 0.05$ corresponds to rejection of the null hypothesis in favor of the long memory alternative.

{\renewcommand{\arraystretch}{1.3}
\begin{table}[h]
\small
\captionsetup{size=small}
\setlength{\tabcolsep}{3pt} 
\caption{Total Memory in Natural Language and Music Data.}
\begin{tabularx}{\columnwidth}{@{} c *{1}{C} *{1}{C} *{1}{C} c @{}}
    \toprule
     & {\bf Data} & {\bf Norm. total memory} & {\bf p-value} & {\bf Reject $\mathcal{H}_0?$} \\
    \cmidrule(l){2-5}
    
    \multirow{ 4}{*}{\shortstack[l]{Natural \\ language}} & Penn TreeBank & 0.163 & $<$1 $\times 10^{-16}$ & \checkmark \\
    & Facebook CBT & 0.0636 & $<$1 $\times 10^{-16}$ & \checkmark \\
    & King James Bible & 0.192 & $<$1 $\times 10^{-16}$ & \checkmark \\  \cmidrule(l){2-5}
    
    \multirow{ 4}{*}{Music} & J.S. Bach & 0.0997 & $<$1 $\times 10^{-16}$ & \checkmark \\
    & Miles Davis & 0.322 & $<$1 $\times 10^{-16}$ & \checkmark \\
    & Oum Kalthoum & 0.343 & $<$1 $\times 10^{-16}$ & \checkmark \\
    \bottomrule
\end{tabularx}
\label{tab:data_res}
\end{table}
}

\begin{figure*}
  \includegraphics[width=\textwidth,height=5cm]{./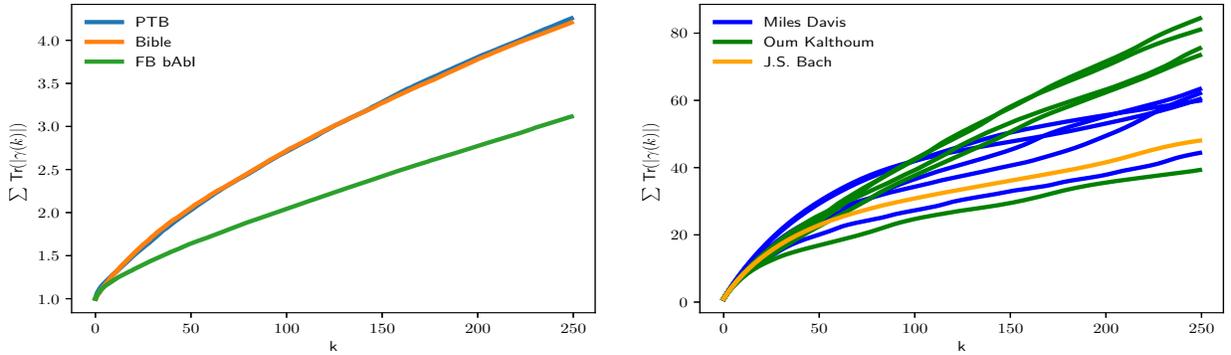}
  \caption{\small Partial sum of the autocovariance trace for embedded natural language and music data. \emph{Left}: Natural language data. For clarity we include only the longest of the 98 books in the Facebook bAbI training set. \emph{Right:} Music data. Each of the five tracks from both Miles Davis and Oum Kalthoum is plotted separately, while the Bach cello suite is treated as a single sequence.}
   \label{fig:data_longmem}
\end{figure*}

\paragraph{Natural language data.} We evaluate long memory in three different sources of English language text data: the Penn TreeBank training corpus \citep{marcus}, the training set of the Children's Book Test from Facebook's bAbI tasks \citep{weston}, and the King James Bible. The Penn TreeBank corpus and King James Bible are considered as single sequences, while the Children's Book Test data consists of 98 books, which are considered as separate sequences. We require that each sequence be of length at least $T = 2^{14}$, which ensures that the periodogram can be estimated with reasonable density near the origin. Finally, we use GloVe embeddings \citep{pennington} to convert each sequence of word tokens to an equal-length sequence of real vectors of dimension $k = 200$. 

The results show significant long memory in each of the text sources, despite their apparent differences. As might be expected, the children's book measured from the Facebook bAbI dataset demonstrates the weakest long-range dependencies, as is evident both from the value of the total memory statistic and the slope of the autocovariance partial sum.

\paragraph{Music data.}  Modeling and generation of music has recently gained significant visibility in the deep learning community as a challenging set of tasks involving sequence data. As in the natural language experiments, we seek to evaluate long memory in a broad selection of representative data. To this end, we select a complete Bach cello suite consisting of 6 pieces from the MusicNet dataset \citep{thickstun1}, the jazz recordings from Miles Davis' \emph{Kind of Blue}, and a collection of the most popular works of famous Egyptian singer Oum Kalthoum.

For the Bach cello suite, we embed the data from its raw scalar wav file format using a reduced version of a deep convolutional model that has recently achieved near state-of-the-art prediction accuracy on the MusicNet collection of classical music \citep{thickstun2}. Details of the model training, including performance benchmarks, are provided in Appendix H of the Supplement.

We are not aware of a prominent deep learning model for either jazz music or vocal performances. Therefore, for the recordings of Miles Davis and Oum Kalthoum, we revert to a standard method and extract mel-frequency cepstral coefficients (MFCC) from the raw wav files at a sample rate of $32000$ Hz \citep{logan}. A study of the impact of embedding choice on estimated long memory, including a long memory analysis of the Bach data under MFCC features, is provided in Appendix G.

The results show that long memory appears to be even more strongly represented in music than in text. We find evidence of particularly strong long-range dependence in the recordings of Miles Davis and Oum Kalthoum, consistent with their reputation for repetition and self-reference in their music. 

Overall, while the results of this section are unlikely to surprise practitioners familiar with the modeling of language and music data, they are scientifically useful for two main reasons: first, they show that our long memory analysis is able to identify well-known instances of long-range dependence in real-world data; second, they establish quantitative criteria for the successful representation of this dependency structure by RNNs trained on such data.

\subsection{Long memory analysis of language model RNNs}

We now turn to the question of whether RNNs trained on one of the datasets evaluated above are able to represent the long-range dependencies that we know to be present. We evaluate the criteria for long memory on three different RNN architectures: long short-term memory (LSTM) \citep{hochreiter}, memory cells \citep{levy}, and structurally constrained recurrent networks (SCRN) \citep{mikolov}. Each network is trained on the Penn TreeBank corpus as part of a language model that includes a learned word embedding and linear decoder of the hidden states; the architecture is identical to the ``small" LSTM model in \citep{zaremba}, which is preferred for the tractable dimension of the hidden state. Note that our objective is not to achieve state-of-the-art results, but rather to reproduce benchmark performance in a well-known deep learning task. Finally, for comparison, we will also include an untrained LSTM in our experiments; the parameters of this model are simply set by random initialization.

{\renewcommand{\arraystretch}{1.3}
\begin{table}[h!]
\small 
\captionsetup{size=small}
\setlength{\tabcolsep}{3pt} 
\caption{Language Model Performance by RNN Type}
\begin{tabularx}{\columnwidth}{@{} *{1}{C} *{1}{C} @{}}
    \toprule
     {\bf Model} & {\bf Test Perplexity} \\
    \cmidrule(l){1-2}
    Zaremba et al. & 114.5 \\
     LSTM & 114.5 \\
     Memory cell & 119.0 \\
     SCRN &  124.3 \\
    \bottomrule
\end{tabularx}
\label{tab:lang}
\end{table}
}

\paragraph{RNN integration of fractionally differenced input.}

Having estimated the long memory parameter $d$ corresponding to the Penn TreeBank training data in the previous section, we simulate inputs $\tilde{x}_{1:T}$ with $T = 2^{16}$ from by fractional differencing of a standard Gaussian white noise and evaluate the total memory of the corresponding hidden representation $\Psi(\tilde{x}_{1:T})$ for each RNN. Results from $n=100$ trials are compiled in Table \ref{tab:rnn_pos_res} (standard error of total memory estimates in parentheses). We test the null hypothesis $\mathcal{H}_0: \bar{d} = 0$ against the one-sided alternative $\mathcal{H}_1: \bar{d} < 0$, which corresponds to the model's failure to represent the full strength of fractional integration observed in the data.

{\renewcommand{\arraystretch}{1.3}
\begin{table}[h]
\small 
\captionsetup{size=small}
\setlength{\tabcolsep}{3pt} 
\caption{Residual Total Memory in RNN Representations of Fractionally Differenced Input.}
\begin{tabularx}{\columnwidth}{@{} c *{1}{C} *{1}{C} c @{}}
    \toprule
    {\bf Model} & {\bf Norm. total memory} & {\bf p-value} & {\bf Reject $\mathcal{H}_0?$} \\
    \cmidrule(l){1-4}
    
     LSTM (trained) & $-8.36 \times 10^{-3}$ (0.00475) &$4.07 \times 10^{-2}$ & \checkmark \\
     LSTM (untrained) &$-6.20 \times 10^{-2}$ (0.00387) & $<$1 $\times 10^{-16}$ & \checkmark \\
     Memory cell & $-1.18 \times 10^{-2}$ (0.00539) & $1.52 \times 10^{-2}$ & \checkmark \\
     SCRN & $-2.62 \times 10^{-2}$ (0.00631) &  $3.32 \times 10^{-5}$ & \checkmark \\  
    \bottomrule
\end{tabularx}
\label{tab:rnn_pos_res}
\end{table}
}

%

\paragraph{RNN transformation of white noise.}

For a complementary analysis, we evaluate whether the RNNs can impart nontrivial long-range dependency structure to white noise inputs. In this case, the input sequence $z_{1:T}$ is drawn from a standard Gaussian white noise process, and we test the corresponding hidden representation $\Psi(z_{1:T})$ for nonzero total memory. As in the previous experiment, we select $T = 2^{16}$, choose the bandwidth parameter $m = \sqrt{T}$, and simulate $n=100$ trials for each RNN. Results are detailed in Table \ref{tab:rnn_res}. We test $\mathcal{H}_0: \bar{d}_0 = 0$ against $\mathcal{H}_1: \bar{d}_0 > 0$; here, the alternative corresponds to successful transformation of white noise input to long memory hidden state.

{\renewcommand{\arraystretch}{1.3}
\begin{table}[h]
\small 
\captionsetup{size=small}
\setlength{\tabcolsep}{3pt} 
\caption{Total Memory in RNN Representations of White Noise Input.}
\begin{tabularx}{\columnwidth}{@{} c *{1}{C} *{1}{C} c @{}}
    \toprule
    {\bf Model} & {\bf Norm. total memory} & {\bf p-value} & {\bf Reject $\mathcal{H}_0?$} \\
    \cmidrule(l){1-4}

     LSTM (trained) & $-8.59 \times 10^{-4}$ (0.00405) & 0.583 & \text{\sffamily X} \\
     LSTM (untrained) &$-4.17 \times 10^{-4}$ (0.00223) & 0.572 & \text{\sffamily X} \\
     Memory cell & $-5.96 \times 10^{-4}$ (0.00452) & 0.552 & \text{\sffamily X}\\
     SCRN & $2.37 \times 10^{-3}$ (0.00522) &  0.324 & \text{\sffamily X} \\  
    \bottomrule
\end{tabularx}
\label{tab:rnn_res}
\end{table}
}

\paragraph{Discussion.} We summarize the main experimental result as follows: there is a statistically well-defined and practically identifiable property, relevant for prediction and broadly represented in language and music data, that is not present according to two fractional integration criteria in a collection of RNNs trained to benchmark performance.

Tables \ref{tab:rnn_pos_res} and \ref{tab:rnn_res} show that each evaluated RNN fails both criteria for representation of the long-range dependency structure of the data on which it was trained. The result holds despite a training protocol that reproduces benchmark performance, and for RNN architectures specifically engineered to alleviate the gradient issues typically implicated in the learning of long-range dependencies. 

%

	\section{Conclusion}
	We have introduced and demonstrated a framework for the evaluation of long memory in RNNs that proceeds from a well-known definition in the time series literature. Under this definition, long memory is the condition enabling meaningful autocovariance at long lags in a multivariate time series. Of course, for sufficiently complex processes, this will not fully characterize the long-range dependence structure of the data generating process. Nonetheless, it represents a practical and informative foundation upon which to develop a statistical toolkit for estimation, inference, and hypothesis testing, which goes beyond the current paradigm of heuristic checks.

The long memory framework makes possible a formal investigation of specific and quantitative hypotheses concerning the fundamental issue of long-range dependencies in deep sequence learning. The experiments presented in this work investigate this phenomenon in natural language and music data, and in the learned representations of RNNs themselves, using the total memory statistic as an interpretable quantity that avoids the challenges associated with high-dimensional testing. The results identify long memory as a broadly prevalent feature of natural language and music data, while showing evidence that benchmark recurrent neural network models designed to capture this phenomenon may in fact fail to do so. Finally, this work suggests future topics in both time series, particularly concerning long memory analysis in high dimensions, and in deep learning, as a challenge to learn long memory representations in RNNs.

	\section*{Acknowledgments}

This work was supported by the Big Data for Genomics and Neuroscience Training Grant 8T32LM012419, NSF TRIPODS Award CCF-1740551, the program ``Learning in Machines and Brains" of CIFAR, and faculty research awards.

\bibliography{longmem_refs_cleaned}
\bibliographystyle{abbrvnat} 

\onecolumn
\section*{Appendix}
\appendix
\section{Slowly varying functions}

\begin{definition}[Slowly varying at infinity / zero] A positive function $L$ is said to be slowly varying at infinity if for any $u > 0$,
$$ L(xu) \sim L(u) \ \ \text{ as } x\to\infty.$$
A function $L$ is slowly varying at zero if $L(1/u)$ is slowly varying at infinity. \end{definition}

Slowly varying functions serve an important purpose in the study of long memory processes by significantly expanding the class of autocovariance (for slowly varying at infinity) or spectral density (slowly varying at zero) functions described without changing their relevant asymptotic behavior. In particular, if we define 
$$ R(u) = u^\rho L(u),$$
with $\rho \in \mathbb{R}$, then $R(au)/R(u) \to a^\rho$ as $u \to \infty$, so that the slowly varying function can be ignored in the limit. Properties of slowly varying functions are also used to show equivalence between various notions of long memory, or establish conditions under which such notions are equivalent. For example, the following proposition relates the time-domain definition of long memory to the summability of the autocovariance function:

\begin{proposition}[Prop. 2.2.1, \cite{pipiras}] Let $L$ be slowly varying at infinity and $p>-1$. Then 
$$ \gamma_k = L(k) k^p, \ \ k\geq 1$$
implies that
$$ \sum_{k=1}^n \gamma_k \sim \frac{L(n)n^{p+1}}{p+1}, \ \ \text{ as } n\to\infty.$$
\end{proposition}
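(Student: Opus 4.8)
This is the summation form of Karamata's theorem, and since $L$ is positive every term $\gamma_k = L(k)k^p$ is positive, so no cancellation can occur. The plan is to fix $\epsilon \in (0,1)$, split the sum as $\sum_{k=1}^n \gamma_k = S_1(n) + S_2(n)$ with $S_1(n) = \sum_{1 \le k \le \epsilon n} \gamma_k$ and $S_2(n) = \sum_{\epsilon n < k \le n} \gamma_k$, show that $S_2(n) \sim \tfrac{1-\epsilon^{p+1}}{p+1}\, L(n)\, n^{p+1}$ while $S_1(n)$ is dominated by a quantity that vanishes as $\epsilon \downarrow 0$ uniformly in $n$, and then let $\epsilon \downarrow 0$. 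Three standard properties of slowly varying functions will be invoked: (i) the uniform convergence theorem, that $L(\lambda x)/L(x) \to 1$ uniformly for $\lambda$ in compact subsets of $(0,\infty)$; (ii) Potter's bound, that for every $\delta>0$ there is $X_\delta$ with $L(y)/L(x) \le 2\max\{(y/x)^\delta,(y/x)^{-\delta}\}$ whenever $x,y \ge X_\delta$; and (iii) the elementary integral comparison $\sum_{k=1}^m k^q \le C_q\, m^{q+1}$, valid for $q>-1$ because $x^q$ is monotone.

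For the principal block, I would write $S_2(n) = L(n)\,n^{p+1}\cdot \tfrac1n \sum_{\epsilon n < k \le n} \tfrac{L(k)}{L(n)}\left(\tfrac kn\right)^p$. On this range $k/n \in [\epsilon,1]$, so (i) gives $\sup_{\epsilon n < k \le n}|L(k)/L(n)-1|\to 0$, and the remaining factor $\tfrac1n\sum_{\epsilon n< k\le n}(k/n)^p$ is a Riemann sum converging to $\int_\epsilon^1 s^p\,ds = \tfrac{1-\epsilon^{p+1}}{p+1}$ (the integrand is bounded on $[\epsilon,1]$ since $\epsilon>0$, so this is valid for any $p>-1$). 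This yields $S_2(n)\sim \tfrac{1-\epsilon^{p+1}}{p+1}\,L(n)\,n^{p+1}$.

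For the near-origin block, the finitely many terms with $k<X_\delta$ contribute $O(1)=o(L(n)n^{p+1})$, since $L(n)n^{p+1}\to\infty$ when $p+1>0$. Fixing any $\delta\in(0,p+1)$ and applying (ii) to the terms with $X_\delta\le k\le \epsilon n$ (both arguments then exceed $X_\delta$, and $k/n\le1$) gives $\gamma_k \le 2L(n)\,n^\delta k^{p-\delta}$, whence by (iii), applicable since $p-\delta>-1$, $\sum_{X_\delta\le k\le\epsilon n}\gamma_k \le C\, L(n)\, n^\delta (\epsilon n)^{p-\delta+1} = C\,\epsilon^{p+1-\delta}\,L(n)\,n^{p+1}$. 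Combining the blocks gives $\limsup_{n\to\infty}\big|\,\sum_{k=1}^n\gamma_k \big/ \big(L(n)n^{p+1}/(p+1)\big) - 1\,\big| \le \epsilon^{p+1} + C(p+1)\epsilon^{p+1-\delta}$, and since $p+1-\delta>0$, letting $\epsilon\downarrow0$ finishes the argument. I expect the main obstacle to be exactly this near-origin block: because $L$ need not be monotone, a crude bound such as $L(k)\le\max_{j\le\epsilon n}L(j)$ is worthless, and one must control it through Potter's bound with an exponent $\delta$ strictly below $p+1$ so that the resulting power $\epsilon^{p+1-\delta}$ is genuinely small; the rest is routine bookkeeping.
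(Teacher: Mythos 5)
Your proof is correct, but note that the paper itself offers no proof of this proposition: it is quoted as Proposition 2.2.1 of Pipiras and Taqqu and the reader is referred to that text (Sections 2.1--2.2), so there is no in-paper argument to compare against. Your argument is the standard proof of the discrete Karamata theorem, and every step checks out. The main block $S_2$ is handled by the uniform convergence theorem, which applies because $k/n$ ranges over the compact set $[\epsilon,1]\subset(0,\infty)$, combined with a Riemann sum for $\int_\epsilon^1 s^p\,ds$ that is legitimate for every $p>-1$ since the integrand is continuous and bounded on $[\epsilon,1]$. The near-origin block is the only delicate part, and you diagnose it correctly: since $L$ need not be monotone, Potter's bound with an exponent $\delta$ strictly below $p+1$ is exactly what is needed to produce both a summable power $k^{p-\delta}$ (so that $p-\delta>-1$) and a factor $\epsilon^{p+1-\delta}$ that vanishes as $\epsilon\downarrow 0$; the finitely many terms below the Potter threshold are $o(L(n)n^{p+1})$ because $n^{\rho}L(n)\to\infty$ for $\rho>0$, a property the paper records in its remark on $R(u)=u^{\rho}L(u)$. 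Positivity of $L$ gives the matching lower bound $\liminf \ge 1-\epsilon^{p+1}$, so sending $\epsilon\downarrow 0$ closes the argument. The only caveat worth recording is that the uniform convergence theorem and Potter's bound require $L$ to be measurable; this hypothesis is implicit in the standard definition used by the cited source but is omitted from the paper's informal definition of slow variation.
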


The equivalence between the time and spectral domain definitions of long memory can be established under the condition that the slowly varying part is quasi-monotone.

\begin{definition}[Quasi-monotone] A slowly varying function $L$ on $[0,\infty)$ quasi-monotone if it is of bounded variation on any compact interval and if there exists some $\delta >0$ such that
$$ \int_0^x u^\delta \vert dL(u) \vert = O(x^\delta L(x)), \ \ \text{ as } x\to\infty.$$ \end{definition}

For a proof we refer to \cite{pipiras} Section 2.2.4. A thorough treatment of slowly varying functions in the context of long memory is available in \cite{pipiras} Sections 2.1-2.2 and \cite{beran} Section 1.3.
\section{Short memory of common time series models}

We first note that in order to show that the class of processes described by a given time series model has short memory, it is sufficient to show 
\begin{align} 
\sum_{k=-\infty}^\infty \vert \gamma_X(k) \vert < \infty \label{crit} \end{align}
for each process $X_t$ belonging to the parametric family. The property $\sum_k \vert \gamma_X(k) \vert = \infty$ is implied by both the frequency and time domain definitions of long memory for a scalar process, which are themselves equivalent under the condition that the slowly varying part of the spectral density near zero is quasimonotone. Therefore, establishing \eqref{crit} for a given class of models implies that they do not satisfy the definition of a long memory process.

\begin{proposition}
Let $X_t$ be an irreducible and aperiodic Markov chain on a finite state space $\mathcal{X}$ such that its corresponding transition matrix $P$ has distinct eigenvalues. Let $g: \mathcal{X} \to \mathbb{R}$, and define $Y_t = g(X_t)$. Then $Y_t$ is a short memory process. 
\end{proposition}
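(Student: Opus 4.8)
The plan is to verify the summability criterion \eqref{crit} for $Y_t$ by showing that $\gamma_Y(k)$ decays geometrically in $|k|$. First I would set up the stationary regime: since $P$ is irreducible and aperiodic on the finite set $\mathcal{X}$, it admits a unique stationary distribution $\pi$, and taking $X_0 \sim \pi$ makes $(X_t)$, hence $(Y_t) = (g(X_t))$, strictly stationary. Writing $P^k(x,y)$ for the $k$-step transition probabilities, a direct computation gives, for $k \geq 0$,
\[
\gamma_Y(k) = \sum_{x,y \in \mathcal{X}} \pi(x)\, g(x)\, g(y)\, \bigl( P^k(x,y) - \pi(y) \bigr),
\]
and $\gamma_Y(-k) = \gamma_Y(k)$ by stationarity, so it suffices to control $P^k(x,y) - \pi(y)$ as $k \to \infty$.

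Next I would invoke the spectral decomposition of $P$. Because $P$ has distinct eigenvalues it is diagonalizable over $\mathbb{C}$, say $P = \sum_{i=1}^{|\mathcal{X}|} \lambda_i\, v_i\, w_i^*$ with right and left eigenvectors normalized so that $w_i^* v_j = \delta_{ij}$. By the Perron--Frobenius theorem the spectral-radius eigenvalue is $\lambda_1 = 1$, simple, with $v_1 = \mathbbm{1}$ and $w_1 = \pi$; irreducibility together with aperiodicity forces every other eigenvalue to satisfy $|\lambda_i| < 1$. Then $P^k = \sum_i \lambda_i^k\, v_i\, w_i^*$, and the rank-one $\lambda_1$-term $v_1 w_1^* = \mathbbm{1}\pi$ is exactly the matrix whose $(x,y)$ entry is $\pi(y)$, so
\[
P^k(x,y) - \pi(y) = \sum_{i=2}^{|\mathcal{X}|} \lambda_i^k\, v_i(x)\, \overline{w_i(y)}.
\]
Substituting into the expression for $\gamma_Y(k)$ and bounding crudely yields $|\gamma_Y(k)| \leq C \rho^{|k|}$ with $\rho = \max_{i \geq 2} |\lambda_i| < 1$ and $C$ depending only on $g$, $\pi$, and the finitely many eigenvectors; summing the geometric series gives $\sum_{k \in \mathbb{Z}} |\gamma_Y(k)| < \infty$, which is \eqref{crit}, so $Y_t$ has short memory.

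The main obstacle — really the only substantive point — is justifying that all non-Perron eigenvalues lie strictly inside the unit disk: irreducibility alone gives only that $1$ is simple and that the unimodular eigenvalues of a period-$p$ chain are the $p$-th roots of unity, so aperiodicity ($p = 1$) is exactly what excludes other unit-circle eigenvalues. A secondary technical care is that eigenvalues and eigenvectors may be complex; this is handled by working in $\mathbb{C}^{|\mathcal{X}|}$ throughout, the final bound being on the real quantity $|\gamma_Y(k)|$. The distinct-eigenvalue hypothesis is used only for convenience, to avoid Jordan blocks; without it one would obtain instead $|\gamma_Y(k)| \leq C\,(1+|k|)^{|\mathcal{X}|}\rho^{|k|}$, which is still summable, so the conclusion is unchanged.
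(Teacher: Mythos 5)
Your proof is correct and follows essentially the same route as the paper's: diagonalize $P$ using the distinct-eigenvalue hypothesis, peel off the rank-one Perron term to get $|P^k(x,y)-\pi(y)| \leq C\rho^k$ with $\rho<1$, and conclude geometric decay and hence absolute summability of $\gamma_Y$. If anything, your justification that the subdominant eigenvalues satisfy $|\lambda_i|<1$ (via irreducibility plus aperiodicity, not distinctness) is more careful than the paper's, which attributes this to the eigenvalues being distinct; your remark that the distinct-eigenvalue assumption is dispensable at the cost of a polynomial factor is also a correct and worthwhile observation.
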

\begin{proof}
Computation of the autocovariance for a finite state Markov model is classical, but we include it here for completeness. Let $X_t$ be an irreducible and aperiodic Markov chain on the finite space $\mathcal{X} = \{1,...,m\}$, and suppose that the transition matrix $P$ (where $P_{ij} = p(X_{t+1} = i | X_t = j)$) has distinct eigenvalues. Then $X_t$ has a unique stationary distribution, and we denote its elements $p(X_t = i) = \pi_i$. 

Let $X_0$ have the distribution $\pi$, and define $Y_t = g(X_t)$ for $t \geq 0$ and some $g: \mathcal{X} \to \mathbb{R}$. Note that $Y_t$ is stationary since $X_t$ is stationary. We will show that the scalar process $Y_t$ has short memory. 

Write the autocovariance
\begin{align*}
\gamma_Y(k) &= \mathbb{E} Y_k Y_0 - [\mathbb{E} Y_0 ]^2 \\
&= \sum_{i=1}^m \sum_{j=1}^m g(i) g(j) p_{ij}^{(k)}\pi_j  - \sum_{i=1}^m \sum_{j=1}^m g(i) g(j) \pi_i \pi_j \\ 
&= \sum_{i=1}^m \sum_{j=1}^m g(i) g(j) \pi_j \left( p_{ij}^{(k)} - \pi_i \right), \end{align*}
where $p_{ij}^{(k)} = p(X_{t+k} = i \vert X_t = j)$.

Since $P$ has distinct eigenvalues, it is similar to a diagonal matrix $\Lambda$:
$$ P = Q\Lambda Q^{-1},$$
so that 
$$ P^k = Q\Lambda^k Q^{-1} = \sum_{i=1}^m \lambda_i^k q_i \tilde{q}_i',$$ 
where $q_i$ and $\tilde{q}_i$ denote the $i^{th}$ row of $Q$ and $Q^{-1}$, respectively, and $x'$ denotes the transpose of $x$. Furthermore, from the existence of the unique stationary distribution $\pi = (\pi_1 ,..., \pi_m)'$ we have that 
$$ P \pi = \pi$$
so that $\lambda_1 = 1$, and since $P$ is a stochastic matrix, the corresponding left eigenvalue is
$$ \begin{bmatrix} 1 & ... & 1 \end{bmatrix} P  = \begin{bmatrix} 1 & ... & 1 \end{bmatrix} .$$

Thus
$$ \lambda_1 q_1 \tilde{q}_i' = \pi \begin{bmatrix} 1 & ... & 1 \end{bmatrix} = \begin{bmatrix} \pi_1 & ... & \pi_1 \\ \vdots & \ddots & \vdots \\ \pi_m & ... & \pi_m \end{bmatrix} = \Pi.$$

Then we can write
$$ P^k  = \Pi + \sum_{i=2}^m \lambda_i^k q_i \tilde{q}_i',$$
and since the $\lambda_i$'s are distinct, $| \lambda_i | < 1$ for $i = 2,...,m$. Therefore,
$$ | p_{ij}^{(k)} - \pi_i | < C_1 s^k$$
for some $s \in (0,1)$, which from above implies that 
$$ \gamma_Y(k) < C_2 s^k.$$	

The absolute convergence of the autocovariance series then follows by comparison to the dominating geometric series.
\end{proof}

Furthermore, as we next show, neither extension of the Markov chain to higher (finite) order or taking (finite) mixtures of Markov chains is sufficient to obtain a long memory process. We provide a novel proof that the mixture transition distribution (MTD) model \citep{raftery} for high-order Markov chains defines a short memory process under conditions similar to those of the proof above.

\begin{proposition}
Let $X_t$ be an order-$p$ Markov chain whose transition tensor is parameterized by the MTD model 
\begin{align} p(X_t = i | X_{t-1} = j_1, ..., X_{t-p} = j_p) = \sum_{\ell = 1}^p \lambda_\ell Q^{(\ell)}_{ij_\ell}, \label{mtd} \end{align}
where each $Q^{(\ell)}$ is a column-stochastic matrix, $\lambda_\ell >0$ for each $\ell = 1,...,p$, and $\sum_\ell \lambda_\ell = 1$. Suppose that the state space $\mathcal{X}$ is finite with $|\mathcal{X}| = m$, and we define $Y_t = g(X_t)$ for some $g: \mathcal{X} \to \mathbb{R}$. Then $Y_t$ is a short memory process.
\end{proposition}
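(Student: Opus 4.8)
The plan is to lift the order-$p$ chain to a first-order chain on an enlarged state space and then re-run the argument of the previous proposition. Concretely, define the ``snippet'' process $\tilde{X}_t = (X_t, X_{t-1}, \ldots, X_{t-p+1}) \in \mathcal{X}^p$. Knowledge of $\tilde{X}_{t-1} = (X_{t-1},\ldots,X_{t-p})$ determines the conditional law of $X_t$ through the MTD formula \eqref{mtd}, and hence the conditional law of $\tilde{X}_t$, so $\tilde{X}_t$ is a homogeneous first-order Markov chain on the finite space $\mathcal{X}^p$ (of cardinality $m^p$) with some transition matrix $\tilde{P}$. Moreover $Y_t = g(X_t) = \tilde{g}(\tilde{X}_t)$ with $\tilde{g}(x_1,\ldots,x_p) = g(x_1)$, so it suffices to bound the autocovariance of a function of the lifted chain.

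First I would record the conditions needed for $\tilde{X}_t$ to be irreducible and aperiodic. Since every $\lambda_\ell > 0$, a one-step transition of $\tilde{X}_t$ shifts the previous coordinates and can place in the leading slot any symbol reachable under the collapsed kernel $\sum_\ell \lambda_\ell Q^{(\ell)}$; assuming (as in the preceding proposition) that the induced chain on $\mathcal{X}^p$ is irreducible and aperiodic — which holds, for instance, when $\sum_\ell \lambda_\ell Q^{(\ell)}$ is irreducible and aperiodic on $\mathcal{X}$ — the chain $\tilde{X}_t$ has a unique stationary distribution $\tilde{\pi}$, and starting $\tilde{X}_0 \sim \tilde{\pi}$ makes $\tilde{X}_t$, hence $X_t$ and $Y_t$, stationary.

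Next I would transfer the geometric-ergodicity bound. Exactly as in the previous proof, writing $\tilde{P}^k = \tilde{\Pi} + (\tilde{P}^k - \tilde{\Pi})$ and expanding, $\gamma_Y(k)$ is a fixed linear combination (coefficients built from $g$ and $\tilde{\pi}$) of the entries $\tilde{p}^{(k)}_{ab} - \tilde{\pi}_a$. It then remains to show $|\tilde{p}^{(k)}_{ab} - \tilde{\pi}_a| \le C s^k$ for some $s \in (0,1)$, after which $|\gamma_Y(k)| \le C' s^k$, so $\sum_k |\gamma_Y(k)| < \infty$ and $Y_t$ has short memory by criterion \eqref{crit}.

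The main obstacle is precisely this last bound, because the lifted matrix $\tilde{P}$ is highly degenerate — most of its entries vanish and it is generally not diagonalizable, so the clean spectral computation of the previous proposition (which assumed distinct eigenvalues) does not carry over. I would instead use the fact that an irreducible aperiodic chain on a finite space is primitive: some power $\tilde{P}^{k_0}$ has all entries strictly positive, giving a Doeblin minorization $\tilde{P}^{k_0}_{ab} \ge \delta > 0$, from which a standard coupling argument yields $|\tilde{p}^{(k)}_{ab} - \tilde{\pi}_a| \le (1-\delta)^{\lfloor k/k_0 \rfloor} \le C s^k$ with $s = (1-\delta)^{1/k_0} \in (0,1)$ (equivalently, Perron--Frobenius, tracking Jordan blocks at the subdominant eigenvalues, gives $\|\tilde{P}^k - \tilde{\Pi}\|_\infty = O(k^r \rho^k)$ with $\rho < 1$). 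Everything else — checking that the lift is Markov, verifying the irreducibility/aperiodicity hypothesis, and the bookkeeping in the autocovariance expansion — is routine.
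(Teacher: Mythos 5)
Your proposal is correct, and it follows the same overall strategy as the paper --- lift the order-$p$ chain to a first-order chain on $\mathcal{X}^p$, observe that $Y_t$ is a function of the lifted chain, and reduce to geometric convergence of the $k$-step transition probabilities to the stationary distribution --- but it diverges at the one step that actually carries the quantitative content. The paper obtains the bound $|q^{(k)}_{s,s'} - \xi_s| \le C s^k$ by assuming the $m^p \times m^p$ lifted transition matrix $Q$ has \emph{distinct eigenvalues} and invoking the spectral decomposition from the order-one proposition; it also assumes each $Q^{(\ell)}$ has strictly positive diagonal to secure aperiodicity. You instead note (rightly) that the lifted matrix is sparse and highly structured, so diagonalizability is an unnatural hypothesis there, and you derive the same geometric bound from primitivity via a Doeblin minorization $\tilde{P}^{k_0}_{ab} \ge \delta$ and coupling (or Perron--Frobenius with Jordan blocks). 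This buys generality --- your argument needs only irreducibility and aperiodicity of the lift, not a spectral non-degeneracy condition --- at the cost of a non-explicit rate $s = (1-\delta)^{1/k_0}$ rather than one tied to the subdominant eigenvalue. Both you and the paper are somewhat loose about \emph{why} the lifted chain is irreducible: the one-step probability of moving from $(j_1,\dots,j_p)$ to $(i,j_1,\dots,j_{p-1})$ is $\sum_\ell \lambda_\ell Q^{(\ell)}_{i j_\ell}$, which can vanish, so the paper's assertion that every state is reachable from every other is not automatic, and your sufficient condition (irreducibility and aperiodicity of $\sum_\ell \lambda_\ell Q^{(\ell)}$) also does not immediately transfer to the product space without an argument; you at least flag this as a hypothesis rather than a fact. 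The remaining bookkeeping (stationarity of $Y_t$, the autocovariance expansion, absolute summability from the geometric bound) matches the paper and is fine.
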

\begin{proof}
In order to write the autocovariance sequence of an MTD process, we must first establish its stationary distribution. Let $Q \in \mathbb{R}^{m^p \times m^p}$ denote the multivariate Markov transition matrix, which has entries
\begin{align*} 
q_{s_{0:p-1}, s'_{1:p}} = p(X_{t} &= s_{0}, ..., X_{t-p+1} = s_{p-1} | X_{t-1} = s'_{1}, ..., X_{t-p} = s'_{p}) \\
&= \begin{cases} \sum_{\ell=1}^p \lambda_\ell Q^{(\ell)}_{s_{0}s'_{\ell}} & \\ \text{ if } s_t = s'_t \text{ for } t= 1,...,p-1 &\text{ otherwise.} \end{cases} \end{align*}

We make the following assumptions on $Q$:
\begin{itemize}
\item $Q$ has distinct eigenvalues
\item Each $Q^{(\ell)}$ has strictly positive elements on the diagonal
\end{itemize}

Each state of $Q$ is reachable from all others, so $Q$ is irreducible. The second assumption above shows that the states corresponding to the $m$ nonzero diagonal elements of $Q$ are aperiodic, and thus $Q$ is aperiodic. The transition matrix $Q$ therefore specifies an ergodic Markov chain and hence has a unique stationary distribution $\xi \in \mathbb{R}^{m^p}$. We will denote by $\pi \in \mathbb{R}^m$ the univariate marginal of $\xi$.

Now let $\xi \in \mathbb{R}^{m^p}$ be the multivariate stationary distribution of $X_t$, and let $\pi \in \mathbb{R}^m$ be its univariate marginal. Let $(X_{-p},...,X_{-1})$ have the distribution $\xi$, and define $X_t$ according to \eqref{mtd} for $t = 0,1,2,...$. Then both $X_t$ and $Y_t = g(X_t)$ are stationary.

The autocovariance $\gamma_Y(t)$ can be written as 
$$ \gamma_Y(k) = \sum_{i=1}^m \sum_{j=1}^m g(i) g(j) \pi_j \left( p_{ij}^{(k)} - \pi_i \right),$$
where $p_{ij}^{(k)} = p(X_{t+k} = i \vert X_t j)$.

Observe that the transition probability $p_{ij}^{(k)}$ can be obtained from the $k$-step multivariate transition matrix $Q^k$ via
\begin{align*} 
p_{ij}^{(k)} &= p(X_{t+k} = i | X_t = j) \\  
&= \sum_{s_{1:p-1}} p(X_{t+k} = i, X_{t+k-1:t+k-p+1} = s_{1:p-1} | X_{t} = j) \\
&= \sum_{s_{1:p-1}} \sum_{s'_{1:p-1}} p(X_{t+k} = i, X_{t+k-1:t+k-p+1} = s_{1:p-1} | X_{t} = j, X_{t-1:t-p+1} = s'_{1:p-1}) p(X_{t-1:t-p+1} = s'_{1:p-1}) \\
&= \sum_{s_{1:p-1}} \sum_{s'_{1:p-1}} q_{is_{1:p-1}, js'_{1:p-1}}^{(k)} p(X_{t-1} = s'_1, ..., X_{t-p+1} = s'_{p-1}). \end{align*}

We note that the summation over $s_{1:p-1}$ is precisely the marginalization required to obtain $\pi_i$ from $\xi$. Therefore, we can write
\begin{align*}
|p_{ij}^{(k)} - \pi_i | &= \left\vert \sum_{s_{1:p-1}} \left( \left[  \sum_{s'_{1:p-1}} q_{is_{1:p-1}, js'_{1:p-1}}^{(k)} p(X_{t-1} = s'_1, ..., X_{t-p+1} = s'_{p-1}) \right] - \xi_{is_{1:p-1}}\right) \right\vert \\
&\leq  \sum_{s_{1:p-1}} \left\vert \left( \left[  \sum_{s'_{1:p-1}} q_{is_{1:p-1}, js'_{1:p-1}}^{(k)} p(X_{t-1} = s'_1, ..., X_{t-p+1} = s'_{p-1}) \right] - \xi_{is_{1:p-1}}\right) \right\vert \end{align*}

However, for each $q_{s_{0:p-1}, s'_{1:p}}$ we have 
$$ |q_{s_{0:p-1}, s'_{1:p}} - \xi_{s_{0:p-1}} | < Cs^k$$
for some $s \in (0,1)$ by an argument analogous to the Markov chain example. This implies 
$$ \left\vert \sum_{s'_{1:p-1}} q_{is_{1:p-1}, js'_{1:p-1}}^{(k)} p(X_{t-1} = s'_1, ..., X_{t-p+1} = s'_{p-1})  - \xi_{is_{1:p-1}} \right\vert < Cs^k$$
since $\sum_{s'_{1:p-1}} q_{is_{1:p-1}, js'_{1:p-1}}^{(k)} p(X_{t-1} = s'_1, ..., X_{t-p+1} = s'_{p-1})$ is a convex combination of elements obeying the same bound. Therefore, we have
\begin{align*}
|p_{ij}^{(k)} - \pi_i | &\leq  \sum_{s_{1:p-1}} \left\vert \left( \left[  \sum_{s'_{1:p-1}} q_{is_{1:p-1}, js'_{1:p-1}}^{(k)} p(X_{t-1} = s'_1, ..., X_{t-p+1} = s'_{p-1}) \right] - \xi_{is_{1:p-1}}\right) \right\vert \\
&< \sum_{s_{1:p-1}} Cs^k \\
&= \tilde{C} s^k, \end{align*}
and hence the MTD model has short memory with exponentially decaying autocovariance.

\end{proof}

For processes on a real-valued state space, the autoregressive moving average (ARMA) model is a well-known and widely used tool. ARMA models have good approximation properties, as evidenced by the existence of AR and MA orders guaranteeing aribitrarily good approximation to a stationary real-valued stochastic process with continuous spectral density \citep{davis}. Furthermore, ARMA models with nontrivial moving average components are equivalent to autoregressive models of infinite order, suggesting that these models can integrate information over long histories. However, despite these appealing properties, this class of models cannot represent statistical long memory.

\begin{proposition}
Define the ARMA process $X_t$ by
$$ \phi(B)X_t = \theta(B)Z_t,$$
where $Z_t$ is a white noise process with variance $\sigma^2$ and $\phi(z) \neq 0$ for all $z \in \mathbb{C}$ such that $|z| = 1$. Then $X_t$ is a short memory process.
\end{proposition}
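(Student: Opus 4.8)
The plan is to exploit the condition $\phi(z) \neq 0$ on the unit circle to obtain a (two-sided) moving-average representation $X_t = \sum_j \psi_j Z_{t-j}$ with geometrically decaying coefficients, and then to transfer that decay to the autocovariance sequence, reducing the claim to the by-now familiar ``comparison to a geometric series'' argument. First I would note that, since $\phi$ is a polynomial with no zeros on $\{|z| = 1\}$, there exist radii $0 < r_1 < 1 < r_2$ such that $\phi$ has no zeros in the closed annulus $\{z : r_1 \leq |z| \leq r_2\}$. Hence the rational function $\psi(z) = \theta(z)/\phi(z)$ is holomorphic on a neighborhood of this annulus and admits a Laurent expansion $\psi(z) = \sum_{j=-\infty}^\infty \psi_j z^j$ convergent there; by the standard theory of ARMA processes \citep{davis}, the unique second-order stationary solution of $\phi(B)X_t = \theta(B)Z_t$ is the linear process $X_t = \psi(B)Z_t = \sum_{j=-\infty}^\infty \psi_j Z_{t-j}$.

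Next I would establish geometric decay of the $\psi_j$ in \emph{both} directions. Applying the Cauchy estimates to $\psi$ on a circle $|z| = \rho$ with $\rho$ close to $r_2$ controls $|\psi_j|$ for $j \geq 0$, and on a circle with $\rho$ close to $r_1$ controls $|\psi_j|$ for $j \leq 0$; combining the two bounds yields a constant $C > 0$ and a rate $\rho_0 \in (0,1)$ with
$$ |\psi_j| \leq C \rho_0^{|j|}, \qquad j \in \mathbb{Z}. $$
In particular $\sum_j \psi_j^2 < \infty$, so the linear process is well defined and, using that $Z_t$ is white noise with variance $\sigma^2$,
$$ \gamma_X(k) = \sigma^2 \sum_{j=-\infty}^\infty \psi_j \psi_{j+k}. $$
Splitting this sum according to the signs of $j$ and $j+k$ and inserting $|\psi_j| \leq C\rho_0^{|j|}$ gives $|\gamma_X(k)| \leq C'(1 + |k|)\rho_0^{|k|}$, hence $|\gamma_X(k)| \leq C'' \tilde\rho^{\,|k|}$ for any $\tilde\rho \in (\rho_0,1)$. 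Summability of $\{\gamma_X(k)\}$, i.e. short memory in the sense of \eqref{crit}, then follows by comparison to a convergent geometric series. (Equivalently, one may observe that the autocovariance generating function $\sum_k \gamma_X(k)z^k = \sigma^2\psi(z)\psi(1/z)$ is holomorphic on an annulus containing the unit circle, so its Laurent coefficients decay geometrically.)

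The main obstacle is the middle step: obtaining a \emph{two-sided} geometric bound on the $\psi_j$. Textbook treatments typically impose the causality condition $\phi(z)\neq 0$ for $|z|\leq 1$, under which $\psi$ is one-sided and the decay is immediate; here the hypothesis only places the zeros of $\phi$ off the unit circle, so zeros inside the disk are allowed and one must control the coefficients of negative index as well. The annulus-of-analyticity argument handles this uniformly, and once the geometric bound on $\psi_j$ is in hand the estimate on $\gamma_X(k)$ is routine.
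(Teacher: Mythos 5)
Your proof is correct and follows essentially the same route as the paper's: a two-sided moving-average representation $X_t=\sum_{j}\psi_j Z_{t-j}$ obtained from the Laurent expansion of $\theta(z)/\phi(z)$ on an annulus containing the unit circle, geometric decay of the coefficients, and a comparison of $\gamma_X(k)=\sigma^2\sum_j\psi_j\psi_{j+k}$ to a geometric series. If anything, you are more careful at the one delicate point: the paper states the coefficient bound as $|\psi_j|<K(1+\epsilon)^{-j}$ rather than $K(1+\epsilon)^{-|j|}$ and then evaluates $\sum_{j\in\mathbb{Z}}(1+\epsilon)^{-2j}$, which diverges as written, whereas your two-sided bound $|\psi_j|\le C\rho_0^{|j|}$ together with the sign-split of the convolution sum yields the intended estimate $|\gamma_X(k)|\le C''\tilde\rho^{\,|k|}$ rigorously.
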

\begin{proof}
As in the Markov chain case, the proof is classical but included for completeness. Let $X_t$ be defined as in the statement above. Then $X_t$ has the representation
$$ X_t = \sum_{j=-\infty}^\infty \psi_j Z_{t-j}$$
where the coefficients $\psi_j$ are given by
$$\theta(z)\phi(z)^{-1} = \psi(z) = \sum_{j=-\infty}^\infty \psi_j z^j,$$
with the above series absolutely convergent on $r^{-1} < |z| < r$ for some $r>1$ (cf. \cite{davis}, Chapter 3).

Absolute convergence implies that there exists some $\epsilon > 0$ and $L < \infty$ such that
$$  \sum_{j= -\infty}^\infty \vert \psi_j (1+\epsilon)^j \vert  = \sum_{j= -\infty}^\infty \vert \psi_j \vert (1+\epsilon)^j = L,$$
so that there exists a $K < \infty$ for which
$$ \vert \psi_j \vert < \frac{K}{(1+\epsilon)^j}.$$ 

The autocovariance can be expressed as
$$ \gamma_X(k) = \sigma^2 \sum_{j=-\infty}^\infty \psi_j \psi_{j+|k|},$$
and thus we can write
\begin{align*}
\vert \gamma_X(k) \vert &= \sigma^2 \left\vert  \sum_{j=-\infty}^\infty \psi_j \psi_{j+|k|} \right\vert \\
&\leq \sigma^2  \sum_{j=-\infty}^\infty \vert \psi_j \vert \vert \psi_{j+|k|} \vert \\
&\leq \sigma^2K^2  \sum_{j=-\infty}^\infty \frac{1}{(1+\epsilon)^j}\frac{1}{(1+\epsilon)^{j+|k|}} \\
&= \left[ \sigma^2K^2  \sum_{j=-\infty}^\infty \frac{1}{(1+\epsilon)^{2j}} \right] \frac{1}{(1+\epsilon)^{|k|}} \\
&= Cs^{|k|} \end{align*}
for $s = 1/(1+\epsilon) \in (0,1)$.

Therefore, as with the Markov models, the autocovariance sequence of an ARMA process is not only absolutely summable but also dominated by an exponentially decaying sequence.
\end{proof}

Finally, we show that in general nonlinear state transitions are not sufficient to induce long range dependence, a point particularly relevant to the analysis of long memory in RNNs.
\begin{proposition}
Define the scalar nonlinear autoregressive process
$$ X_{t+1} = f(X_t) + \varepsilon_t,$$
where $\{\varepsilon_t\}$ is a white noise sequence with positive density with respect to Lebesgue measure and satisfying $\mathbb{E}|\varepsilon_t| <\infty$, while $f: \mathbb{R} \to \mathbb{R}$ is bounded on compact sets and satisfies
$$ \sup_{|x| > r} \left| \frac{f(x)}{x} \right| < 1$$
for some $r > 0$.
Then $X_t$ has a unique stationary distribution $\pi$, and the sequence of random variables $\{X_t, t\geq 0\}$ initialized with $X_0 \sim \pi$ is strictly stationary and geometrically ergodic.

Furthermore, if 
$$ \mathbb{E}|X_{t}|^{2+\delta} < \infty$$ 
for some $\delta >0$, then $\{X_t\}$ is a short memory process.
\end{proposition}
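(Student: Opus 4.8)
The plan is to build directly on the geometric ergodicity already established in the first part of the proposition. The guiding principle is that a stationary, geometrically ergodic Markov chain is strongly mixing at a geometric rate, and geometric mixing together with a moment strictly above the second forces the autocovariance sequence to decay geometrically; absolute summability — and hence short memory by the criterion \eqref{crit} — then follows immediately, exactly as in the Markov-chain and ARMA arguments above.

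First I would record the classical fact that geometric ergodicity entails exponential $\beta$-mixing. In the drift/minorization formulation underlying the first part, one has $\|P^n(x,\cdot)-\pi\|_{TV}\le M(x)\rho^n$ with $\rho\in(0,1)$ and $\pi(M)<\infty$, so for the stationary chain started from $\pi$ the absolute regularity coefficients satisfy $\beta(n)\le\pi(M)\rho^n$, and \emph{a fortiori} the strong mixing coefficients obey $\alpha(n)\le\beta(n)\le C\rho^n$. No new work is needed here: the hypotheses $\sup_{|x|>r}|f(x)/x|<1$ and $\mathbb{E}|\varepsilon_t|<\infty$ are precisely what produced the geometric drift in the first part, and that drift also supplies the $\pi$-integrable bounding function $M$.

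Next, with $\alpha(n)\le C\rho^n$ in hand I would apply Davydov's covariance inequality to $\xi=X_t$ and $\eta=X_{t+k}$ with exponents $p=q=2+\delta$ (so that $1/p+1/q+\delta/(2+\delta)=1$), obtaining
\begin{align*}
|\gamma_X(k)| = |\Cov(X_t,X_{t+k})| \;\le\; 8\,\alpha(k)^{\delta/(2+\delta)}\,\|X_t\|_{2+\delta}\,\|X_{t+k}\|_{2+\delta} \;\le\; C'\,\rho^{\,k\delta/(2+\delta)},
\end{align*}
using stationarity and the hypothesis $\mathbb{E}|X_t|^{2+\delta}<\infty$. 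Setting $s=\rho^{\delta/(2+\delta)}\in(0,1)$ gives $|\gamma_X(k)|\le C's^{|k|}$, so $\sum_k|\gamma_X(k)|<\infty$, and \eqref{crit} yields short memory with an exponentially dominated autocovariance sequence.

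The step requiring the most care — rather than cleverness — is the passage from geometric ergodicity to a quantitatively useful mixing coefficient: one must verify that the bounding function $M$ (equivalently, the geometric drift function) is $\pi$-integrable, so that the \emph{averaged} quantity $\beta(n)$, and not merely $\|P^n(x,\cdot)-\pi\|_{TV}$ for each fixed $x$, decays geometrically. This is automatic from the standard drift construction but should be stated explicitly. It is also worth remarking why the extra moment enters: with only a second moment Davydov's inequality degenerates ($1/p+1/q<1$ fails), and a direct $L^2$ estimate would instead need a spectral-gap property that geometric ergodicity does not provide for non-reversible chains; alternatively, if the first part is phrased as $V$-geometric ergodicity with $V(x)\asymp 1+|x|$ one may bound $\gamma_X(k)=\mathbb{E}[X_t(P^kX_t-\mathbb{E}X_t)]$ directly by $\mathbb{E}[|X_t|\,V(X_t)]\,\rho^k$, which uses only a second moment — so the $2+\delta$ assumption is exactly the slack that makes the purely mixing-based route go through cleanly.
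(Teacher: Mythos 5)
Your argument is essentially the paper's own: both pass from geometric ergodicity to exponential $\beta$-mixing, then to $\alpha$-mixing via $2\alpha\le\beta$, and then invoke the covariance inequality for $\alpha$-mixing sequences with a $(2+\delta)$-moment (your ``Davydov'' bound is the same estimate the paper takes from Ibragimov--Linnik, Thm.~17.2.2) to get $|\gamma_X(k)|\le C s^{|k|}$ and hence absolute summability. The only substantive difference is that you take the first assertion (stationarity and geometric ergodicity) as given, whereas the statement asks for it and the paper supplies it by checking aperiodicity from the positive noise density and verifying the strengthened Tweedie drift criterion with $g(x)=|x|$ via Tj{\o}stheim's Theorem 4.1 --- a step you should at least cite explicitly rather than presuppose.
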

\begin{proof}
The proof proceeds by analysis of $X_t$ as a Markov chain on a general state space $(\mathbb{R},\mathcal{B})$, where $\mathcal{B}$ is the standard Borel sigma algebra on the real line. Define the transition kernel $P(x,B) = P(X_t \in B | X_{t-1} = x)$ for any $x\in \mathbb{R}$ and $B \in \mathcal{B}$.

We first establish that $X_t$ is aperiodic. A $d$-cycle is defined by a collection of disjoint sets $\{D_i\}, 0 = 1,...,d-1$ such that
\begin{enumerate}
\item For $x \in D_i$, $P(x,D_{i+1}) = 1$, $i = 0,...,d-1 \mod d$.
\item The set $[\cup_i D_i]^C$ has measure zero.
\end{enumerate}

The period is defined as the largest $d$ for which $\{X_t\}$ has a $d$-cycle \citep{meyn}. Clearly, however, since $\varepsilon_t$ has positive density with respect to Lebesgue measure, $ p(x,D) = 1$ only if $D = \mathbb{R}$ up to null sets. Thus the period is $d=1$, so $\{X_t\}$ is aperiodic.

Strict stationarity and geometric ergodicity are established by showing that the aperiodic chain $\{X_t\}$ satisfies a strengthened version of the Tweedie criterion \citep{meyn}, which requires the existence of a measurable non-negative function $g: \mathbb{R} \to \mathbb{R}$, $\epsilon > 0$, $R>1$ and $M < \infty$ such that
\begin{align*}
R \mathbb{E}[ g(X_{t+1}) | X_t = x] &\leq g(x) - \epsilon, \ \ x \in K^c \\
\mathbb{E}[ g(X_{t+1}) \mathbbm{1}\{X_{t+1} \in K^c\} | X_t = x] &\leq M, \ \ x \in K
\end{align*}
for some set $K$ satisfying
$$ \inf_{x \in K} \sum_{n=1}^m P^n(x,B) > 0$$
Under the conditions of $f$ and $\varepsilon_t$ assumed above, this criterion is established for the process $X_t$ in \cite{tjostheim} (Thm 4.1), with $g(x) = |x|$.

Geometric ergodicity implies that the 
$$ \norm{\lambda P^n - \pi}_{TV} \leq C\rho^n,$$
with $C < \infty$, $\rho \in (0,1)$, and where $\norm{\cdot}_{TV}$ denotes the total variation distance between measures. A well-known result in the theory of Markov chains \citep{bradley} establishes that geometric ergodicity is equivalent to absolute regularity, which is parameterized by
$$ \beta(k) = \sup \frac{1}{2} \sum_{i=1}^I \sum_{j=1}^J | P(A_i \cup B_j) - P(A_i) P(B_j) |,$$
where the supremum is taken over all finite partitions $\{A_1,...,A_I\}$ and $\{B_1,...,B_J\}$ of the sigma fields $\mathcal{A} = \sigma(X_t)$ and $\mathcal{B} = \sigma(X_{t+k})$. In particular, $\beta(k)$ decays at least exponentially fast. 

Furthermore, for any two sigma fields $\mathcal{A}$ and $\mathcal{B}$ we have 
\begin{align*}
\beta(\mathcal{A},\mathcal{B})  &= \sup \frac{1}{2} \sum_{i=1}^I \sum_{j=1}^J | P(A_i \cup B_j) - P(A_i) P(B_j) | \\
&\geq \sup \frac{1}{2} | P(A \cup B) - P(A)P(B)|, \ \ A \in \mathcal{A}, B \in \mathcal{B} \\
&= 2 \alpha(\mathcal{A},\mathcal{B}), \end{align*}
so that the $\alpha$-mixing parameter is also bounded by an exponentially decaying sequence.

Finally, if $\mathbb{E}|X|^{2+\delta}$ for some $\delta > 0$, then the absolute covariance obeys (\cite{ibragimov}, Thm. 17.2.2)
$$ |\gamma(k) |  = \sigma^{-2} |\rho(k) | \leq C\alpha(k)^{\delta / (2+\delta)},$$
which completes the proof.
\end{proof}

\section{Gradient of the GSE objective}

Recall that the objective function is given by

\begin{align*} 
\mathcal{L}_m(d) &= \log \det \widehat{G}(d) - 2 \sum_{i=1}^m d_i \frac{1}{m} \sum_{j=1}^m \log \lambda_j, \end{align*}
with 
\begin{align*}
\widehat{G}(d) &= \frac{1}{m} \sum_{j=1}^m \text{Re} \left[ \Lambda_j(d)^{-1} I_{T,X}(\lambda_j) \Lambda^*_j(d)^{-1} \right] \\
\Lambda_j(d) &= \text{diag}(\lambda_j^{-d}e^{i(\pi-\lambda_j)/2}) \\
I_{T,X}(\lambda_j) &= y_j y_j^*,\ \  y_j = \frac{1}{\sqrt{2\pi T}} \sum_{t = 1}^T x_t  e^{- i \lambda_j t}, \ \ \lambda_j = 2\pi j / T. \end{align*}

The derivative with respect to the element $d_\ell$ of the long memory vector $d$, for any $\ell = 1,...,p$, is
$$ \frac{\partial}{\partial d_\ell} R(d) = \Tr\left[\widehat{G}(d)^{-1} \frac{\partial}{\partial d_\ell} \widehat{G}(d) \right] - \frac{2}{m} \sum_{j=1}^m \log \lambda_j.$$

Note that Fourier frequencies $\lambda_j$ are strictly positive for $j \geq 1$, so that $\log \lambda_j$ is well defined. 

For the term $\frac{\partial}{\partial d_\ell} \widehat{G}(d)$, note that the $(h,k)$ element of the matrix $\widehat{G}(d)$ can be written as 

$$ \frac{1}{m} \sum_{j=1}^m \text{Re}\left[ I(\lambda_j)_{h,k} \exp\left( (d_h + d_k) \log \lambda_j + \frac{i(\pi - \lambda_j)(d_h - d_k)}{2} \right) \right],$$

and therefore the derivative $\frac{\partial}{\partial d_\ell} \widehat{G}(d)$ is given by

$$\left( \frac{\partial}{\partial d_\ell} \widehat{G}(d)\right)_{h,k} = \begin{cases} \frac{1}{m} \sum_{j=1}^m \text{Re}\left[ I(\lambda_j)_{\ell, k} c_j^- \exp(c_j^+ d_k) \exp(c_j^- d_\ell) \right] & \text{ for } h = \ell, h \neq k \\  \frac{1}{m} \sum_{j=1}^m \text{Re}\left[ I(\lambda_j)_{h, \ell} c_j^+ \exp(c_j^- d_h) \exp(c_j^+ d_\ell) \right] & \text{ for } k = \ell, h \neq k \\ \frac{1}{m} \sum_{j=1}^m \text{Re}\left[ 2 I(\lambda_j)_{\ell, \ell} \log \lambda_j  \exp(2 d_\ell \log \lambda_j)\right] & \text{ for } \ell = h = k \\ 0 & \text{ otherwise} \end{cases},$$

where 
\begin{align*} 
c_j^- &= \log \lambda_j - i\left(\frac{\pi - \lambda_j}{2}\right) \\
c_j^+ &= \log \lambda_j + i\left(\frac{\pi - \lambda_j}{2}\right). \end{align*}
\section{Bias study for bandwidth parameter}

We demonstrate the potential for semiparametric estimation to incur bias when the bandwidth parameter $m$ is set too high relative to the over length $T$ of the observed sequence. The bias results from inclusion of periodogram ordinates in the long memory estimator that capture behavior in the spectral density function not local to the origin. 

\begin{figure}[h!]
  \includegraphics[width=\textwidth,height=5cm]{./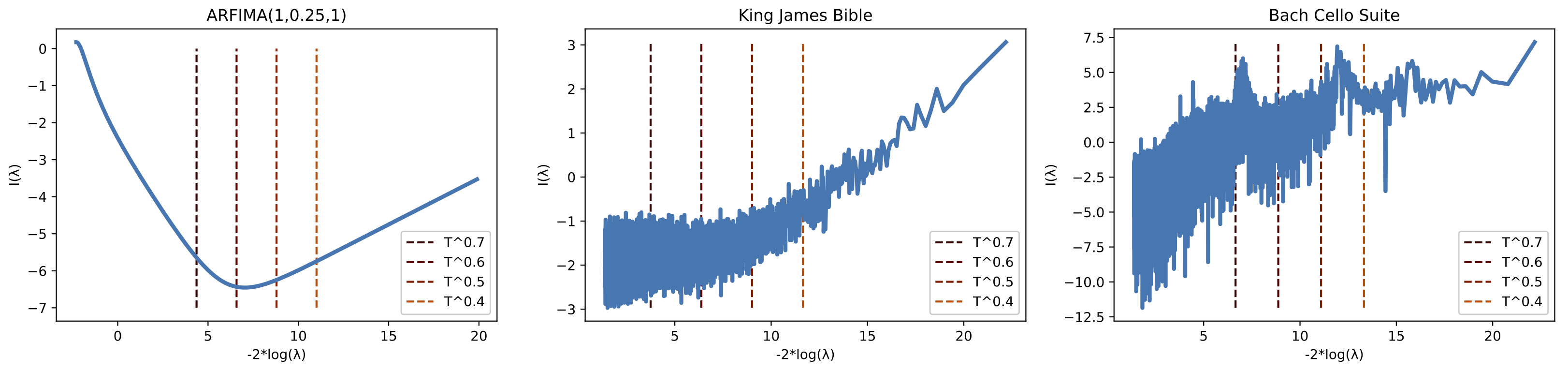}
  \caption{\small Spectral density function of an ARFIMA(1,$d$,1) process (left) and smoothed estimates of the periodogram for the first coordinate of the embedded Bible text and Bach cello suite (center and right, respectively). Cutoff points associated with four choices of the bandwidth $m$ are plotted as vertical dashed lines; the semiparametric estimate of the long memory for each sequence is essentially a measure of the slope based on the subset of points $(-2 \lambda, I(\lambda))$ to the \emph{right} of this line.}
   \label{fig:bias}
\end{figure}

We give an illustration for univariate time series, which allows us to take advantage of a convenient visual interpretation of the long memory as the slope of $\log I(\lambda_j)$ against $-2\log(\lambda_j)$ as $\lambda_j \to 0$. Figure \ref{fig:bias} shows the spectral density function corresponding to three scalar processes: an ARFIMA(1,$d$,1) process with $d = 0.25$, a univariate projection of the embedded text from the King James Bible, and a univariate projection of the embedded Bach cello suite. For the ARFIMA process, the spectral density function can be computed exactly; for the other two sequences, it is estimated by the smoothed periodogram. 

By marking the cutoff points $-2 \log \lambda_m$ associated with different choices of $m$, we indicate the subset of points $(-2 \log \lambda_j, \log I(\lambda_j))$ to the right of this cutoff used to compute the semparametric estimate of $d$. In the scalar case, this is essentially the slope of the SDF as $\lambda$ approaches zero; thus it becomes clear that bias can be introduced when points sufficiently far from the origin are included. On the other hand, choosing $m$ too small introduces the risk of high variance in the estimator; note for example that the estimate with $m = T^{0.4}$ for the Bach cello suite would be strongly influenced by a single point just to the left of $-2\log \lambda = 15$.
\section{Validation of total memory estimator}

We compute the total memory statistic
$$ \bar{d} = \mathbbm{1}^T \hat{d}_{\text{GSE}}$$
for simulated fractionally differenced Gaussian white noise sequences of dimension $k = 200$. We simulate four different settings for the long memory parameter:
\begin{itemize}

\item {\bf Zero}: Each coordinate of $d$ is equal to zero.

\item {\bf Constant}: Each coordinate of $d$ is set to the same value, $d = 0.25$.

\item {\bf Subset}: 90$\%$ of the coordinates are set to $0$, while the remaining $10\%$ are set to have strong long memory with $d = 0.4$.

\item {\bf Range}: The elements of $d$ are drawn from a scaled Beta distribution with support on $(0,0.25)$ and centered at $0.125$. 

\end{itemize}

For each setting, we simulate $n = 100$ sequences and compute the total memory. Results are plotted in Figure \ref{fig:tm_sim}, while in Table \ref{tab:tm_sim} we compare the sample mean and variance of the estimator compared to the asymptotic value stated in the main paper.

\begin{figure}[h!]
  \includegraphics[width=\textwidth,height=5cm]{./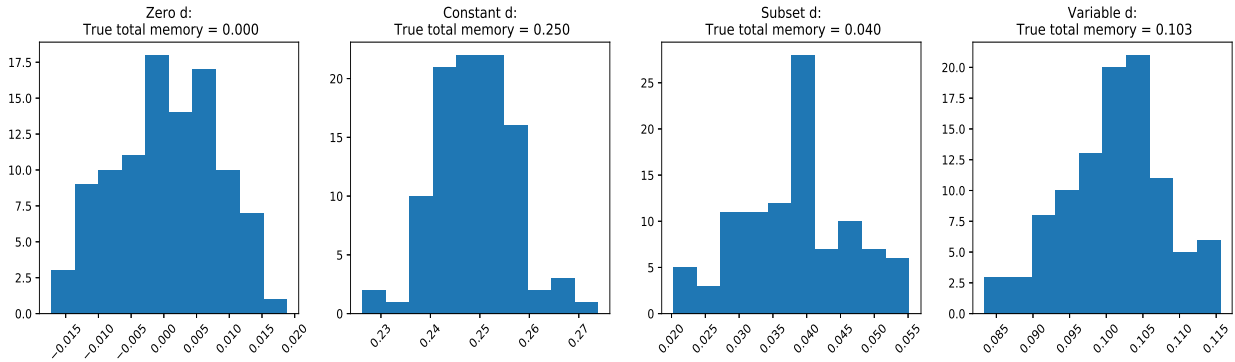}
  \caption{\small Sample distribution of the total memory estimator $\bar{d}$ in four different simulation settings.}
  \label{fig:tm_sim}
\end{figure}

{\renewcommand{\arraystretch}{1.3}
\begin{table}[h]
\centering
\small 
\captionsetup{size=small}
\setlength{\tabcolsep}{5pt} 
\caption{Comparison of the sample mean and variance for the total memory estimator with the true total memory of the generating process and the asymptotic variance of the total memory estimator (both given in parentheses).}
\begin{tabularx}{0.6\textwidth}{c *{1}{C} c}
    \toprule
    {\bf Setting} & {\bf Mean} & {\bf Variance} \\
    \cmidrule(l){1-3}
    
     Zero & $2.82 \times 10^{-4}$ (0.0) & 0.00801 (0.00698)  \\
     Constant & 0.249 (0.25) & 0.00793 (0.00698)\\
     Subset & 0.382 (0.04) & 0.00804 (0.00698)\\
     Range & 0.101 (0.1029) & 0.00696  (0.00698)\\  
    \bottomrule
\end{tabularx}
\label{tab:tm_sim}
\end{table}
}

In each of these four diverse simulation settings, the total memory estimator accurately recovers the true underlying parameter of the data generating process.
\section{Calibration of total memory vs. Wald test in high dimensions}

Here we demonstrate that the standard Wald test can be badly miscalibrated in the high-dimensional regime, whereas testing for long memory with the total memory statistic remains well-calibrated. Recall that, given an estimate $\hat{d}_{\text{GSE}}$ of the multivariate long memory parameter, the Wald statistic for the null hypothesis $\mathcal{H}_0: d = 0$ is computed as 
$$ t_{\text{Wald}} = \hat{d}_{\text{GSE}}^T(\Omega/m)\hat{d}_{\text{GSE}}.$$
This quantity is distributed as a $\chi^2(p)$ random variable under $\mathcal{H}_0$.

For the total memory, we compute
$$ \bar{d} = \mathbbm{1}^T \hat{d}_{\text{GSE}},$$
and in the main paper we have shown that this quantity is distributed as a $\mathcal{N}(0,\Omega/m)$ random variable when the true total memory $\bar{d_0} = 0$.

We simulate $n = 100$ realizations of length $T = 2^{16}$ from a standard Gaussian process (thus $d=0$) of dimension $p = 200$, computing both the Wald and total memory test statistics. In Figure \ref{fig:test_stats}, we plot the a comparison of the sample distribution of each test statistic against its asymptotic distribution over a range of values for $m$. For values of $m$ close to $p$, we see that the empirical type-I error of the Wald test is severely inflated relative to the nominal level $\alpha = 0.05$; in other words, the test spuriously rejects the null and claims to find long memory when none exists at a rate much higher than accounted for. The total memory test, by contrast, largely avoids this issue, even in the case where there are barely more observations than dimensions.

\begin{figure}[t!]
\centering
  \includegraphics[width=0.75\textwidth,height=12cm]{./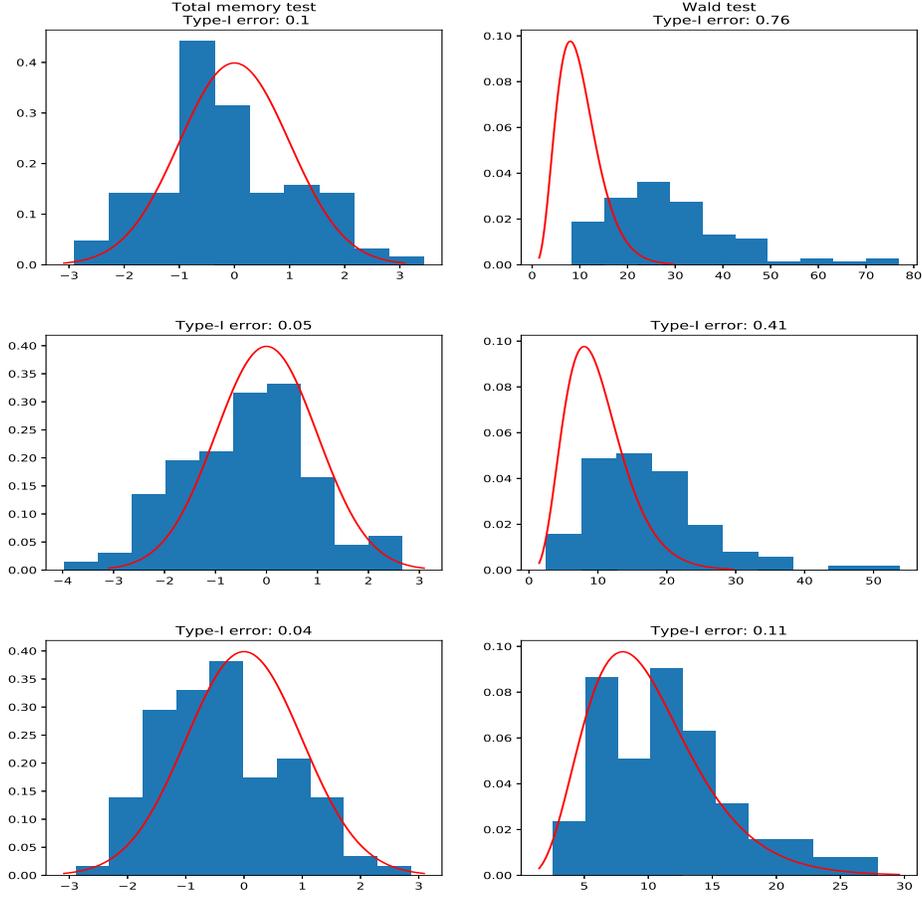}
  \caption{\small Sample distribution of the test statistic over $n = 100$ trials for $m = \sqrt{T} = 256$ (top row),  $m = 512$ (middle), and  $m = 1280$ (bottom). Empirical type-I errors are computed using the critical value corresponding to a nominal type-I error of $0.05$.}
  \label{fig:test_stats}
\end{figure}

Of course, with enough data, the Wald test becomes increasingly well-calibrated, but this is not at all an easy condition to satisfy while maintaining the integrity of the statistical analysis. We have already seen in Appendix C that simply increasing $m$ is not an option for real-world data, as this is likely to induce significant bias. On the other hand, the length $T$ of the observed sequence would have to be enormous, even by machine learning standards, to achieve $m \gg p$ with the reasonable choice $m = \sqrt{T}$ when the dimension $p$ is large. Finally, even if such data were available, we would likely prefer a method that allows valid inference at lower $m$ for computational reasons.
\section{Impact of embedding choice on long memory}

We evaluate the impact of embedding choice on estimated long memory from two perspectives. First, we include a re-analysis of the Bach cello suite data using the same MFCC features as used for the Miles Davis and Oum Kalthoum recordings. This allows us to state results for long memory estimation uniformly across a single choice of embedding, and to evaluate the impact of embedding choice on the long memory analysis across two very different but informative representations of the raw time series. The results (see Table \ref{tab:bach_embed}) show that the Bach data has long memory under both representations, though the average strength as measured by normalized total memory is somewhat variable. 

{\renewcommand{\arraystretch}{1.3}
\begin{table}[h!]
\small 
\captionsetup{size=small}
\setlength{\tabcolsep}{3pt} 
\caption{Long memory of Bach data by choice of embedding.}
\begin{tabularx}{\columnwidth}{@{} *{1}{C} *{1}{C} *{1}{C} c @{}}
    \toprule
     {\bf Embedding} & {\bf Norm. total memory} & {\bf p-value} & {\bf Reject $\mathcal{H}_0?$} \\
   \hline
Mel-frequency cepstral coefficients & 0.308 & 0.003 & \checkmark \\ 
Convolutional features & 0.0997 & $<  1 \times 10^{-16}$ & \checkmark \\
    \bottomrule
\end{tabularx}
\label{tab:bach_embed}
\end{table}
}

\begin{figure}[h!]
\centering
  \includegraphics[width=0.45\textwidth,height=6cm]{./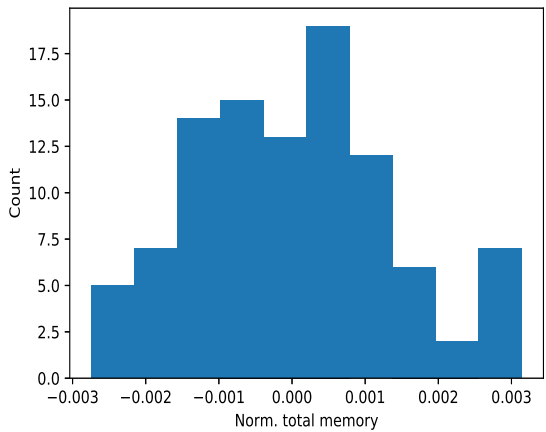}
  \caption{\small Histogram of normalized total memory computed from $n = 100$ permutations of the Penn TreeBank training data.}
  \label{fig:perm}
\end{figure}

Second, we consider a ``negative control" experiment in which we re-estimate the long memory vector for the embedded Penn TreeBank training set after permuting the sequential ordering of the data. This addresses the question of whether our positive result truly captures a sequence-dependent property of the data, or if it could have been produced spuriously as the consequence of other decisions related to the data analysis (including, for example, the choice of embedding). We compute the total memory statistic for $n=100$ random permutations of the Penn TreeBank training data. 

The results (see Figure \ref{fig:perm}) show that the total memory of the permuted data is concentrated near zero, with a sample mean of $1.90 \times 10^{-5}$ and standard error $0.00136$; a one-sample test of the mean correspondingly fails to reject the null hypothesis $\mathcal{H}_0: \bar{d} = 0$ with $p=0.494$.

\section{Classical music features from a MusicNet convolutional model} 

The reduced version of the MusicNet model of \cite{thickstun2} used to obtain an embedding for the Bach cello suite is derived from the convolutional model implemented in \texttt{musicnet\_module.ipynb}, a PyTorch interface to MusicNet available at \url{https://github.com/jthickstun/pytorch\_musicnet}. We reduce the number of hidden states to $200$ (this corresponds to setting $k=200$ in the notebook), both for computational tractability in the optimization procedure and to achieve consistency with the embedding dimension for our natural language experiments.

The model is trained on the MusicNet training corpus with no further modification of the tutorial notebook. Successful training and an informative feature mapping are indicated by the competitive performance of the model, even despite the reduced dimension of the hidden representation, in terms of the average precision of its predictions on the test set (see Table \ref{tab:music}). Results for our trained model (\emph{longmem-embed}) are favorable in comparison to both short-time Fourier transform (STFT) and commercial software (Melodyne) baselines, while approaching the quality of the fully learned filterbank (Learned filterbank; \citet{thickstun1}) and state-of-the-art translation invariant network (Wide-translation-invariant; \citet{thickstun2}).

{\renewcommand{\arraystretch}{1.3}
\begin{table}[h!]
\center
\small 
\captionsetup{size=small}
\setlength{\tabcolsep}{3pt} 
\caption{Performance Comparison for Models of MusicNet Data} 
\begin{tabularx}{0.4\textwidth}{ c c }
    \toprule
     {\bf Model} & {\bf Avg. Precision} \\
    \cmidrule(l){1-2}
     STFT & 60.4 \\
     Melodyne & 58.8 \\
     \emph{longmem-embed} & 65.1 \\
     Learned filterbank & 67.8 \\
     Wide-translation-invariant & 77.3 \\
    \bottomrule
\end{tabularx}
\label{tab:music}
\end{table}
}

\end{document}